\newtheorem{proposition}{Proposition}
\begin{document}

\title{A Labeled Random Finite Set Online Multi-Object Tracker for Video Data}
\author{Du Yong Kim, Ba-Ngu Vo,~\IEEEmembership{Member,~IEEE,} and~Ba-Tuong
Vo,~\IEEEmembership{Member,~IEEE}}
\maketitle


%

%
%

\markboth{Journal of \LaTeX\ Class Files,~Vol.~x, No.~x, x~xxxx}{Shell
\MakeLowercase{\textit{et al.}}: Bare Demo of IEEEtran.cls for Computer
Society Journals}

\IEEEcompsoctitleabstractindextext{\begin{abstract}
This paper proposes an online multi-object tracking algorithm for image observations using a top-down Bayesian formulation that 
seamlessly integrates state estimation, track management, clutter rejection, occlusion and mis-detection handling 
into a single recursion. This is achieved by modeling the multi-object state as labeled random finite set and using the 
Bayes recursion to propagate the multi-object filtering density forward in time. The proposed filter updates tracks with 
detections but switches to image data when mis-detection occurs, thereby exploiting the efficiency of 
detection data and the accuracy of image data. Furthermore the labeled random finite set framework enables the 
incorporation of prior knowledge that mis-detections in the middle of the scene are likely to be due to 
occlusions. Such prior knowledge can be exploited to improve occlusion handling, especially long occlusions that 
can lead to premature track termination in on-line multi-object tracking. Tracking performance is compared to 
state-of-the-art algorithms on synthetic data and well-known benchmark video datasets.

\end{abstract}

\begin{keywords}
online multi-object tracking, Track-before-detect, random finite set
\end{keywords}} 
\IEEEdisplaynotcompsoctitleabstractindextext

\IEEEpeerreviewmaketitle

\section{Introduction}

In a multiple object setting, not only do the states of the objects vary
with time, but the number of objects also changes due to objects appearing
and disappearing. In this work, we consider the problem of jointly
estimating the time-varying number of objects and their trajectories from a
stream of noisy images. In particular, we are interested in multi-object
tracking (MOT) solutions that compute estimates at a given time using only
data up to that time. These so-called online solutions are better suited for
time-critical applications.

A critical function of a multi-object tracker is track management, which
concerns track initiation/termination and track labeling or identifying
trajectories of individual objects. Track management is more challenging for
online algorithms than for batch algorithms. Usually, track
initiation/termination in online MOT algorithms is performed by examining
consecutive detections in time \cite{Breitenstein11}, \cite{BPF}. However,
false positives generated by the background, compounded by false negatives
from object occlusions and mis-detections, can result in false tracks and
lost tracks, especially in online algorithms. False negatives also cause
track fragmentation in batch algorithms as reported in \cite{DAT08}, \cite%
{Berclaz}, \cite{Milan_2014}. With the exception of the recent network flow 
\cite{Afshin_CVPR15} techniques, track labels are assigned upon track
initiation, and maintained over time until termination. An online
multi-object Bayesian filter that provides systematic track labeling using
labeled random finite set (RFS) was proposed in \cite{VV13}.\newline

In most video MOT approaches, each image in the data sequence is compressed
into a set of detections before a \emph{filtering} operation is applied to
keep track of the objects (including undetected ones) \cite{BPF, Maggio08}.
Typically, in the filtering module, motion correspondence or data
association is first determined followed by the application of standard
filtering techniques such as Kalman or sequential Monte Carlo~\cite{Maggio08}%
. The main advantage of performing detection before filtering is the
computational efficiency in the compression of images into relevant
detections. The main disadvantage is the loss of information, in addition to
mis-detection and false alarms, especially in low signal to noise ratio
(SNR) applications.

Track-before-detect (TBD) is an alternative approach, which by-passes the
detection module and exploits the spatio-temporal information directly from
the image sequence. The TBD methodology is often required in tracking
applications for low SNR image data \cite{TBD_survey}, \cite{Vo2010}, \cite{PapiKim15}, \cite{Papi_etal15}, \cite{Santosh13}, \cite{Santosh15}. In visual tracking
applications, perhaps the most well-known TBD MOT algorithm is BraMBLe~\cite%
{Bramble}. Other visual MOT algorithms that can be categorized as TBD
include \cite{Perez02}, \cite{Color_PF} which exploit color-based
observation models, \cite{Mix_PF03}, \cite{BPF}, \cite{Branko_IVC}, which
exploit multi-modality of distributions, and \cite{Hoseinnezhad2012}, \cite{Hoseinnezhad2013} which
uses multi-Bernoulli random finite set models. While the TBD approach
minimizes information loss, it is computationally more expensive. A balance
between tractability and fidelity is important in the design of the
measurement model.

\indent In this paper, we present an efficient online MOT algorithm for
video data that exploits the advantages of both detection-based and TBD
approaches to improve performance while reducing the computational cost. The
proposed MOT filter updates the tracks adaptively with detections for
efficiency, or with local regions on the image to minimize information loss.
In addition it seamlessly integrates state estimation, track management,
clutter rejection, mis-detection and occlusion handling in a single Bayesian
recursion.\newline
\indent Specifically, using the RFS framework we propose a hybrid
multi-object likelihood function that accommodates both detections and image
observations, thereby generalizing the standard multi-object likelihood \cite%
{Mahler_book07} and the separable likelihood for image \cite{Vo2010}.
Further, we establish conjugacy of the Generalized Labelled Multi-Bernoulli
(GLMB) distributions with respect to the proposed likelihood function, which
then yields an analytic solution to the multi-object Bayes recursion since
the GLMB family is closed under the Chapman-Kolmogorov equation. The
proposed MOT filter exploits the efficiency of the detection-based approach
which avoids updating with the entire image, while at the same time
exploiting relevant information at the image level by using only small
regions of the image where mis-detected objects are expected. 

The labeled RFS formulation \cite{VV13} addresses state estimation, track
management, clutter rejection, mis-detection and occlusion handling, in one
single Bayes recursion. Generally, an online MOT algorithm would terminate a
track that has not been detected over several frames. In many video MOT
applications however, it is observed that away from designated exit regions
such as scene edges, the longer an object is in the scene, the less likely
it is to disappear. Intuitively, this observation can be used to delay the
termination of tracks that have been occluded over an extended period, so as
to improve occlusion handling. The use of labeled RFS in our proposed filter
provides a principled and inexpensive means to exploit this observation for
improved occlusion handling.

\indent The remainder of the paper is structured as follows. The Bayesian
filtering formulation of the MOT problem using labeled RFS is given in
Section \ref{sec:MOT}, followed by details of the proposed solution in
Section \ref{sec:GLMBMOT}. Performance evaluation of the proposed MOT filter
against state-of-the-art trackers is presented in Section \ref%
{sec:Experiment}, and concluding remarks are given in Section \ref%
{sec:Conclusion}. 
\begin{figure}[tbp]
	\begin{center}
		\includegraphics[height=.45\linewidth]{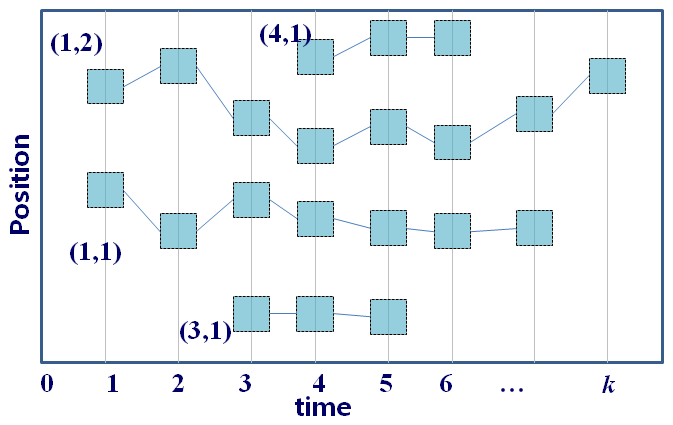} 
	\end{center}
	\caption{1D multi-object trajectories with labeling}
	\label{Fig: label}
\end{figure}

\section{Bayesian Multiple Object Tracking\label{sec:MOT}}

This section outlines the RFS framework for MOT that accommodates
uncertainty in the number of objects, the states of the objects and their
trajectories. The salient feature of this framework is that it admits direct
parallels between traditional Bayesian filtering and MOT. The modeling of
the multi-object state as an RFS in Subsection \ref{subsec:MOS} enables
Bayesian filtering concepts to be directly translated to the multi-object
case in Subsection \ref{subsec:MOF}. Subsection \ref{subsec:Occlusion}
examines the MOT problem in the presence of occlusion.

\subsection{Multi-object State\label{subsec:MOS}}

\indent To distinguish different object trajectories in a multi-object
setting, each object is assigned a unique label $\ell _{k}$ that consists of
an ordered pair $(t,i)$, where $t$ is the time of birth and $i$ is the index
of individual objects born at the same time \cite{VV13}. For example, if two
objects appear in the scene at time 1, one is assigned label (1,1) while the
other is assigned label (1,2), see Fig. \ref{Fig: label}. A trajectory or
track is the sequence of states with the same label.\newline
\indent Formally, the state of an object at time $k$ is a vector $\mathbf{x}%
_{k}=(x_{k},\ell _{k})\in \mathbb{X\times L}_{k}$, where $\mathbb{L}_{k}$
denotes the label space for objects at time $k$ (including those born prior
to $k$). Note that $\mathbb{L}_{k}$ is given by $\mathbb{B}_{k}\cup \mathbb{L%
}_{k-1}$, where $\mathbb{B}_{k}$ denotes the label space for objects born at
time $k$ (and is disjoint from $\mathbb{L}_{k-1}$). Suppose that there are $%
N_{k}$ objects at time $k$, with states $\mathbf{x}_{k,1},...,\mathbf{x}%
_{k,N_{k}}$. In the context of MOT, the collection of states, referred to as
the \emph{multi-object state}, is naturally represented as a finite set 
\begin{equation*}
\mathbf{X}_{k}=\{\mathbf{x}_{k,1},...,\mathbf{x}_{k,N_{k}}\}\in \mathcal{F}(%
\mathbb{X\times L}_{k}),
\end{equation*}%
where $\mathcal{F}(\mathbb{X\times L}_{k})$ denotes the space of finite
subsets of $\mathbb{X\times L}_{k}$. We denote cardinality (number of
elements) of $\mathbf{X}$ by $|\mathbf{X}|$ and the set of labels of $%
\mathbf{X}$, $\{\ell :(x,\ell )\in \mathbf{X}\}$, by $\mathcal{L}(\mathbf{X}%
) $. Note that since the label is unique, no two objects have the same
label, i.e. $\delta _{|\mathbf{X}|}(|\mathcal{L}(\mathbf{X})|)=1$. Hence $%
\Delta (\mathbf{X})\triangleq $ $\delta _{|\mathbf{X}|}(|\mathcal{L}(\mathbf{%
X})|)$ is called the \emph{distinct label indicator}.\newline
\indent For the rest of the paper, we follow the convention that
single-object states are represented by lower-case letters (e.g. $x$, $%
\mathbf{x}$), while multi-object states are represented by upper-case
letters (e.g. $X$, $\mathbf{X}$), symbols for labeled states and their
distributions are bold-faced to distinguish them from unlabeled ones (e.g. $%
\mathbf{x}$, $\mathbf{X}$, $\mathbf{\pi }$, etc.), and spaces are
represented by blackboard bold (e.g. $\mathbb{X}$, $\mathbb{Z}$, $\mathbb{L}$%
, $\mathbb{N}$, etc.). The list of variables $X_{m},X_{m+1},...,X_{n}$ is
abbreviated as $X_{m:n}$. We denote a generalization of the Kroneker delta
that takes arbitrary arguments such as sets, vectors, integers etc., by 
\begin{equation*}
\delta_{Y}[X]\triangleq \left\{ 
\begin{array}{l}
1,\text{ if }X=Y \\ 
0,\text{ otherwise}%
\end{array}%
\right. .
\end{equation*}%
For a given set $S$, $1_{S}(\cdot )$ denotes the indicator function of $S$,
and $\mathcal{F}(S)$ denotes the class of finite subsets of $S$. For a
finite set $X$, the multi-object exponential notation $f^{X}$ denotes the
product $\prod_{x\in X}f(x)$, with $f^{\emptyset }=1$. The inner product $%
\int f(x)g(x)dx$ is denoted by $\left\langle f,g\right\rangle $.

\subsection{Multi-object Bayes filter\label{subsec:MOF}}

From a Bayesian estimation viewpoint the multi-object state is naturally
modeled as an RFS or a simple-finite point process \cite{Mahler2003}. While
the space $\mathcal{F}(\mathbb{X\times L}_{k})$ does not inherit the
Euclidean notion of probability density, Mahler's Finite Set Statistic
(FISST) provides a suitable notion of integration/density for RFSs \cite%
{Mahler_book07, Mahler_book14}. This approach is mathematically consistent
with measure theoretic integration/density but circumvents measure theoretic
constructs \cite{Vo_SMCPHD}.\newline
\indent At time $k$, the multi-object state $\mathbf{X}_{k}$ is observed as
an image $y_{k}$. All information on the set of object trajectories
conditioned on the observation history $y_{1:k}$, is captured in the \emph{%
multi-object posterior density} 
\begin{equation*}
\boldsymbol{\pi }_{0:k}(\mathbf{X}_{0:k}|y_{1:k}\mathbf{)}\propto
\prod\limits_{j=1}^{k}g_{j}(y_{j}|\mathbf{X}_{j})\mathbf{f}_{j|j-1\!}(%
\mathbf{X}_{j}|\mathbf{X}_{j-1})\boldsymbol{\pi }_{0}(\mathbf{X}_{0}\mathbf{)%
}
\end{equation*}%
where $\boldsymbol{\pi }_{0}$ is the initial prior, $g_{j}(\cdot |\cdot )$
is the \emph{multi-object likelihood} \emph{function} at time $j$, $\mathbf{f%
}_{j|j-1}(\cdot |\cdot )$ is the \emph{\ multi-object transition density} to
time $j$. The multi-object likelihood function encapsulates the underlying
observation model while the multi-object transition density encapsulates the
underlying models for motions, births and deaths of objects. Note that track
management is incorporated into the Bayes recursion via the multi-object
state with distinct labels.\newline
\indent MCMC approximations of the posterior density have been proposed in 
\cite{Vu14,Craciun} for detection measurements and image measurements
respectively. Results on satellite imaging applications reported in \cite%
{Craciun} are very impressive. However, these techniques are still expensive
and not suitable for on-line application.\newline
\indent For real-time tracking, a more tractable alternative is the \emph{%
multi-object filtering density}, a marginal of the multi-object posterior.
For notational compactness, herein we omit the dependence on data in the
multi-object densities. The multi-object filtering density can be
recursively propagated by the \emph{multi-object Bayes filter} \cite%
{Mahler2003}, \cite{Mahler2007a} according to the following prediction and
update%
\begin{align}
\boldsymbol{\pi }_{k+1|k}(\mathbf{X}_{k+1})& =\int \mathbf{f}_{k+1|k\!}(%
\mathbf{X}_{k+1}|\mathbf{X}_{k})\boldsymbol{\pi }_{k}(\mathbf{X}_{k}\mathbf{)%
}\delta \mathbf{X}_{k},  \label{Chap_Kolmo_LRFS} \\
\boldsymbol{\pi }_{k+1}(\mathbf{X}_{k+1})& =\frac{g_{k+1}(y_{k+1}|\mathbf{X}%
_{k+1})\boldsymbol{\pi }_{k+1|k}(\mathbf{X}_{k+1})}{\int g_{k+1}(y_{k+1}|%
\mathbf{X})\boldsymbol{\pi }_{k+1|k}(\mathbf{X})\delta \mathbf{X}},
\label{Bayesupdate_LRFS}
\end{align}%
where the integral is a \emph{set integral} defined for any function $%
\mathbf{f:\mathcal{F}(}\mathbb{X}\mathcal{\times }\mathbb{L}_{k}\mathbb{)}%
\rightarrow \mathbb{R}$ by 
\begin{equation*}
\int \mathbf{f}(\mathbf{X})\delta \mathbf{X}=\sum_{i=0}^{\infty }\frac{1}{i!}%
\int \mathbf{f}(\{\mathbf{x}_{1},...,\mathbf{x}_{i}\})d(\mathbf{x}_{1},...,%
\mathbf{x}_{i}).
\end{equation*}

Bayes optimal multi-object estimators can be formulated by minimizing the
Bayes risk with ordinary integrals replaced by set integrals as in \cite%
{Mahler_book14}. One such estimator is the marginal multi-object estimator 
\cite{Mahler_book07}.\newline
\indent A generic particle implementation of the Bayes multi-object filter (%
\ref{Chap_Kolmo_LRFS})-(\ref{Bayesupdate_LRFS}) was proposed in \cite%
{Vo_SMCPHD} and applied to labeled multi-object states in \cite{PapiKim15}.
The \emph{Generalized labeled Multi-Bernoulli} (GLMB) filter is an analytic
solution to the Bayes multi-object filter, under the standard multi-object
dynamic and observation models \cite{VV13}.

\subsubsection{Standard multi-object dynamic model}

Given the multi-object state $\mathbf{X}_{k}$ (at time $k$), each state $%
(x_{k},\ell _{k})\in \mathbf{X}_{k}$ either survives with probability $%
P_{S,k}(x_{k},\ell _{k})$ and evolves to a new state $(x_{k+1},\ell _{k+1})$
(at time $k+1$) with probability density $f_{k+1|k}(x_{k+1}|x_{k},\ell
_{k})\delta _{\ell _{k}}[\ell _{k+1}]$ or dies with probability $%
1-P_{S,k}(x_{k},\ell _{k})$. The set $\mathbf{B}_{k+1}$ of new objects born
at time $k+1$ is distributed according to the labeled multi-Bernoulli (LMB)%
\begin{equation*}
\Delta (\mathbf{B}_{k+1})\omega _{B,k+1}(\mathcal{L(}\mathbf{B}%
_{k+1}))p_{B,k+1}^{\mathbf{B}_{k+1}},
\end{equation*}%
where $\omega _{B,k+1}(L)=\left[ 1_{\mathbb{B}_{k+1}}\,r_{B,k+1}\right] ^{L}%
\left[ 1-r_{B,k+1}\right] ^{\mathbb{B}_{k+1}-L}$, $r_{B,k+1}(\ell )$ is the
probability that a new object with label $\ell $ is born, and $%
p_{B,k+1}(\cdot ,\ell )$\ is the distribution of its kinematic state \cite%
{VV13}. The multi-object state $\mathbf{X}_{k+1}$ (at time $k+1$) is the
superposition of surviving objects and new born objects. It is assumed that,
conditional on $\mathbf{X}_{k}$, \ objects move, appear and die
independently of each other. The expression for the multi-object transition
density $\mathbf{f}_{k+1|k}$ can be found in \cite{VV13, VVP14}. The
standard multi-object dynamic model enables the Bayes multi-object filter to
address motion, births and deaths of objects.

\subsubsection{Standard multi-object observation model}

In most applications a designated detection operation $D$ is applied to $%
y_{k}$ resulting in a set of points 
\begin{equation}
Z_{k}=D(y_{k}) \in \mathbb{Z}.  \label{point_obs}
\end{equation}
Since the detection process is not perfect,
false positives and false negatives are inevitable. Hence only a subset of $%
Z_{k}$ correspond to some objects in the scene (not all objects are
detected) while the remainder are false positives. The most popular
detection-based observation model is described in the following.\newline
\indent For a given multi-object state $\mathbf{X}_{k}$, each $(x,\ell )\in 
\mathbf{X}_{k}$ is either detected with probability $P_{D,k}(x,\ell )$ and
generates a detection $z\in Z_{k}$ with likelihood $g_{D,k}(z|x,\ell )$ or
missed with probability $1-P_{D,k}(x,\ell )$. The \emph{multi-object
observation }$Z_{k}$ is the superposition of the observations from detected
objects and Poisson clutter with intensity $\kappa_{k}$. The ratio 
\begin{equation}
\sigma_{D,k}(z|x,\ell )\triangleq \frac{g_{D,k}(z|x,\ell )}{\kappa _{k}(z)}
\label{eq:SNRD}
\end{equation}%
can be interpreted as the detection signal to noise ratio (SNR).\newline
\indent Assuming that, conditional on $\mathbf{X}_{k}$, detections are
independent of each other and clutter, the multi-object likelihood function
is given by \cite{Mahler_book07}, \cite{VV13, VVP14} 
\begin{equation}
g_{k}(y_{k}|\mathbf{X}_{k})\propto \sum_{\theta \in \Theta _{k}(\mathcal{L(}%
\mathbf{X}_{k}))}\prod\limits_{(x,\ell )\in \mathbf{X}_{k}}\psi
_{D(y_{k})}^{(\theta (\ell ))}(x,\ell )  \label{eq:RFSmeaslikelihood0}
\end{equation}%
where: $\Theta _{k}(I)$ is the set of \emph{positive 1-1} maps $\theta
:I\rightarrow \{0$:$|Z_{k}|\}$, i.e. maps such that \emph{no two distinct
arguments are mapped to the same positive value}; and 
\begin{equation}
\psi _{\!\{z_{1:M}\}\!}^{(j)}(x,\ell )=\left\{ \!\!%
\begin{array}{ll}
P_{\!D,k}(x,\ell )\sigma _{\!D,k}(_{\!}z_{j\!}|x,\ell ), & \!\!\text{if }j=1%
\text{:}M \\ 
1-P_{\!D,k}(x,\ell ), & \!\!\text{if }j=0%
\end{array}%
\right. \!\!.  \label{eq:PropConj5}
\end{equation}%
The map $\theta $ specifies which objects generated which detections, i.e.
object $\ell $ generates detection $z_{\theta (\ell )}\in Z_{k}$, with
undetected objects assigned to $0$. The positive 1-1 property means that $%
\theta $ is 1-1 on $\{\ell :\theta (\ell )>0\}$, the set of labels that are
assigned positive values, and ensures that any detection in $Z_{k}$ is
assigned to at most one object.\newline
\indent The standard multi-object observation model enables the Bayes
multi-object filter to address mis-detection and false detection, but not
occlusion. It assumes that each object is detected independently from each
other, and that a detection cannot be assigned to more than one object. This
is clearly not valid in occlusions.

\subsection{Bayes Optimal Occlusion Handing\label{subsec:Occlusion}}

\begin{figure}[tbp]
	\begin{center}
		\includegraphics[height=.35\linewidth]{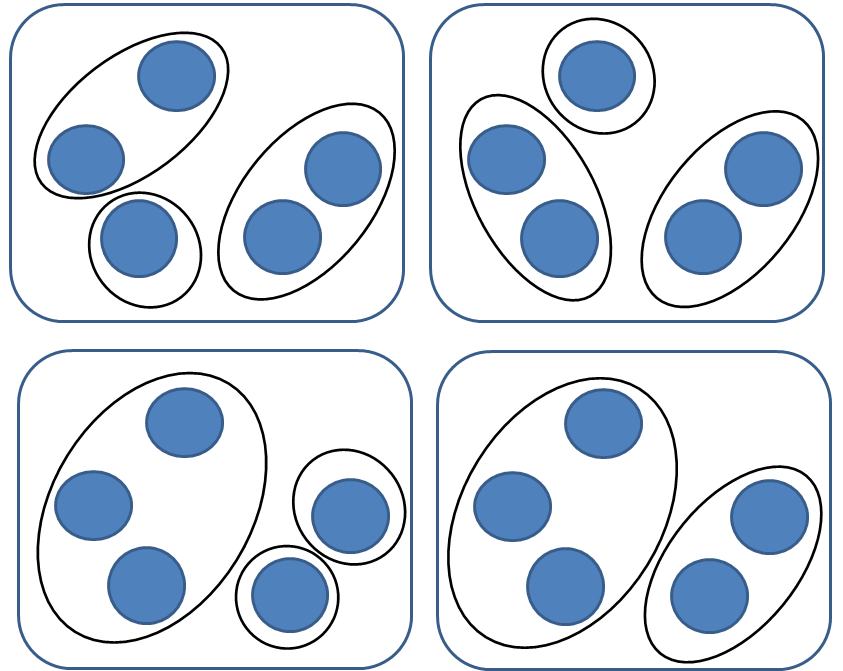} 
		\includegraphics[height=.35\linewidth]{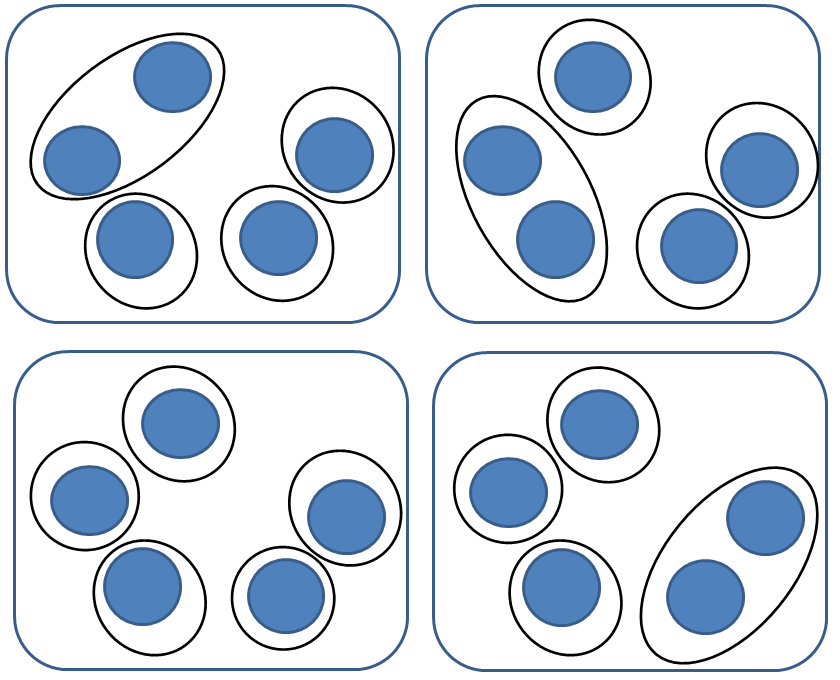} 
	\end{center}
	\caption{Examples of partitions for five objects}
	\label{Fig: partition}
\end{figure}

By relaxing the assumption that each object is independently detected, a
multi-object observation model that explicitly addresses occlusion (as well
as mis-detections and false positives) was proposed in \cite{Beard15}. The
main difference between this so-called \emph{merged-measurement} model and
the standard model is the idea that each group of objects (instead of each
object) in the multi-object state generates at most one detection \cite%
{Beard15}. Fig. \ref{Fig: partition} shows various partitions or groupings
of a multi-object state with five objects.\newline
\indent A \emph{partition} $\mathcal{U}_{\mathbf{X}}$ of a finite set $%
\mathbf{X}$ is a collection of mutually exclusive subsets of $\mathbf{X}$,
whose union is $\mathbf{X}$. The collection of all partitions of $\mathbf{X}$
is denoted by $\mathcal{P}(\mathbf{X})$.\ It is assumed that given a
partition $\mathcal{U}_{\mathbf{X}}$, each group $\mathbf{Y}\in \mathcal{U}_{%
\mathbf{X}} $ generates at most one detection with probability $P_{D,k}(%
\mathbf{Y})$, independent of other groups, and that conditional on detection
generates $z$ with likelihood $g_{D,k}(z|\mathbf{Y})$.\newline
\indent Let $\mathcal{L}(\mathcal{U}_{\mathbf{X}})$ denote the collection of
labels of the partition $\mathcal{U}_{\mathbf{X}}$, i.e. $\mathcal{L}(%
\mathcal{U}_{\mathbf{X}})\triangleq \{\mathcal{L}(\mathbf{Y}):\mathbf{Y}\in 
\mathcal{U}_{\mathbf{X}}\}$ (note that $\mathcal{L}(\mathcal{U}_{\mathbf{X}})
$ forms a partition of $\mathcal{L}(\mathbf{X})$). Let $\Xi _{k}(\mathcal{L}(%
\mathcal{U}_{\mathbf{X}})\mathcal{)}$ denote the class of all positive 1-1
mappings $\vartheta :\mathcal{L}(\mathcal{U}_{\mathbf{X}})\rightarrow \{0$:$%
|Z_{k}|\}$. Then, the likelihood that a given partition $\mathcal{U}_{%
\mathbf{X}}$ of a multi-object state $\mathbf{X}$, generates the detection
set $Z_{k}$ is 
\begin{equation}
\sum\limits_{\vartheta \in \Xi _{k}(\mathcal{L}(\mathcal{U}_{\mathbf{X}})%
\mathcal{)}}\prod\limits_{\mathbf{Y}\in \mathcal{U}_{\mathbf{X}}}\psi
_{Z_{k}}^{(\vartheta (\mathcal{L(}\mathbf{Y}\mathcal{)}))}(\mathbf{Y})
\label{Merged_likelihood2}
\end{equation}%
where%
\begin{equation*}
\psi _{\!\{z_{1:M}\}\!}^{(j)}(\mathbf{Y})=\left\{ \!\!%
\begin{array}{ll}
P_{\!D,k}(\mathbf{Y})\sigma _{\!D,k}(z_{j}|\mathbf{Y}), & \!\!\text{if }j=1%
\text{:}M \\ 
1-P_{\!D,k}(\mathbf{Y}), & \!\!\text{if }j=0%
\end{array}%
\right. ,
\end{equation*}%
with $\sigma _{D,k}(z_{j}|\mathbf{Y})=g_{D,k}(_{\!}z_{j\!}|\mathbf{Y}%
)/\kappa _{k}(z_{j})$ denoting the detection SNR for group $\mathbf{Y}$. The
merged-measurement likelihood function is obtained by summing the group
likelihoods\ (\ref{Merged_likelihood2}) over all partitions of $\mathbf{X}$ 
\cite{Beard15}:%
\begin{equation*}
g_{k}(y_{k}|\mathbf{X})=\sum_{\mathcal{U}_{\mathbf{X}}\in \mathcal{P}(%
\mathbf{X})}\sum\limits_{\vartheta \in \Xi _{k}(\mathcal{L}(\mathcal{U}_{%
\mathbf{X}})\mathcal{)}}\prod\limits_{\mathbf{Y}\in \mathcal{U}_{\mathbf{X}%
}}\!\psi _{Z_{k}}^{(\vartheta (\mathcal{L(}\mathbf{Y}\mathcal{)}))}(\mathbf{Y%
}).
\end{equation*}

The multi-object filter (\ref{Chap_Kolmo_LRFS})-(\ref{Bayesupdate_LRFS})
with merged-measurement likelihood is Bayes optimal in the sense that the
filtering density contains all information on the current multi-object state
in the presence of false positives, mis-detections and occlusions.
Unfortunately, this filter is numerically intractable due to the sum over
all partitions of the multi-object state in the merged-measurement
likelihood. At present, there is no polynomial time technique for truncating
sums over partitions. Moreover, given a partition, computations involving
the joint detection probability $P_{D,k}(\mathbf{Y})$, joint likelihood $%
g_{D,k}(z|\mathbf{Y})$ and associated joint densities are intractable except
for scenarios with a few objects.\newline
\indent A GLMB approximation that reduces the number of partitions using the
cluster structure of the predicted multi-object state and the sensor's
resolution capabilities was proposed in \cite{Beard15}. Also, computation of
joint densities are approximated by products of independent densities that
minimise the Kullback-Leibler divergence \cite{Papi_etal15}. Case studies on
MOT with bearings only measurements shows very good tracking performance.
Nonetheless, at present, this filter is still computationally demanding and
therefore not suitable for online MOT with image data.

\section{GLMB filter for tracking with image data\label{sec:GLMBMOT}}

The GLMB filter (with the standard measurement likelihood) is a suitable
candidate for online MOT \cite{VVP14, VVH}. However, it is neither designed
to handle occlusion nor image data. Even though occluded objects share the
observations of the occluding objects, this situation is not permitted in
the standard multi-object likelihood. Consequently, uncertainties in the
states of occluded objects grow, while their existence probabilities quickly
diminish to zero, leading to possible hi-jacking, and premature track
termination in longer occlusions.\newline
\indent This section proposes an efficient \emph{GLMB filter} that exploits
information from image data and addresses false positives, mis-detections
and occlusions. Subsection \ref{subsec:Likelihood} extends the standard
observation model to allow occluded objects to share observations at the
image level while Subsection \ref{subsec:Death} incorporates, into the death
model, domain knowledge that mis-detected tracks with long durations are
unlikely to disappear. The GLMB filter for image data and an efficient
implementation are then described in Subsections \ref{subsec:VGLMB} and \ref%
{subsec:Implementation}.

\subsection{Hybrid Multi-Object Measurement Likelihood\label%
{subsec:Likelihood}}

\begin{figure}[tbp]
	\begin{center}
		\includegraphics[height=.35\linewidth]{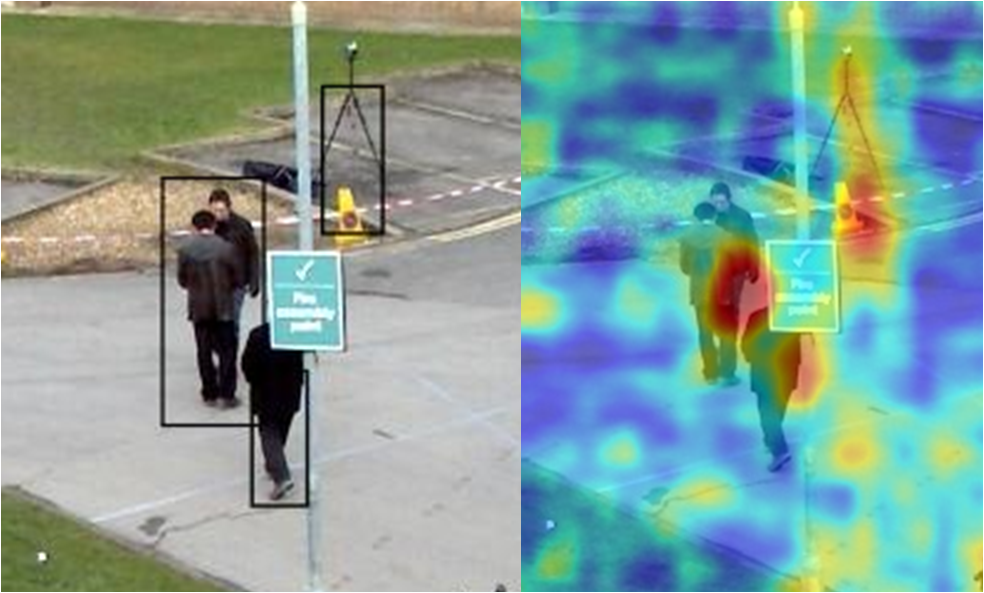} 
	\end{center}
	\caption{Example of shared measurement in occlusion}
	\label{Fig: Shared_measurement}
\end{figure}

While the detection set $Z_{k}$ is an efficient compression of the image
observation $y_{k}$, mis-detected (including occluded) objects will not be
updated by the filter. On the other hand the uncompressed observation $y_{k}$
contains relevant information about all objects, but updating with $y_{k}$
is computationally expensive. Conceptually, we can have the best of both
worlds by updating detected objects with the associated detections and
mis-detected objects with the image observations localised to regions where
these objects are expected. More importantly, this strategy exploits the
fact that occluded objects share measurements with the objects occluding
them as illustrated in Fig. \ref{Fig: Shared_measurement}.\newline
\indent A hybrid tractable multi-object likelihood function that
accommodates both detection and image observations can be obtained as
follows. For tractability, it is assumed that each object generates
observation independently from each other (similar to the standard
observation model).\newline
\indent Given an object with state $(x,\ell )$ the likelihood of observing
the local image $T(y_{k})$ (some transformation of the image $y_{k}$) is $%
g_{T,k}(T(y_{k})|x,\ell )$. On the other hand, given that there are no
objects, the likelihood of observing $T(y_{k})$ is $g_{T,k}(T(y_{k})|%
\emptyset )$. The ratio%
\begin{equation}
\sigma _{T,k}(T(y_{k})|x,\ell )\triangleq \frac{g_{T,k}(T(y_{k})|x,\ell )}{%
g_{T,k}(T(y_{k})|\emptyset )}  \label{eq:SNRV}
\end{equation}%
can be interpreted as the image SNR (c.f. detection SNR (\ref{eq:SNRD})).%
\newline
\indent For a given association map $\theta $ in the likelihood function (%
\ref{eq:RFSmeaslikelihood0}), an object with state $(x,\ell )$ is
mis-detected if $\theta (\ell )=0$, in which case the value of $\psi
_{Z_{k_{\!}}}^{(\theta (\ell ))}(x,\ell )$ is $1-P_{D,k}(x,\ell )$, the
probability of a miss. Consequently, after the Bayes update, track $\ell $
has no dependence on the observation $y_{k}$. In order for track $\ell $ to
be updated with the local image $T(y_{k})$, the value of $\psi
_{D(y_{k})}^{(\theta (\ell ))}(x,\ell )$ should be scaled by the image SNR $%
\sigma _{T,k}(T(y_{k})|x,\ell )$. Note that the value of $\psi
_{D(y_{k})}^{(\theta (\ell ))}(x,\ell )$ should remain unchanged for $\theta
(\ell )>0$. Formally, this can be achieved by defining an extension of (\ref%
{eq:PropConj5}) as follows 
\begin{equation}
\varphi _{y_{k}}^{(j)}(x,\ell )\triangleq \psi _{D(y_{k})}^{(j)}(x,\ell ) 
\left[ \sigma _{T,k}(T(y_{k})|x,\ell )\right] ^{\delta _{0}[j]},
\label{eq: likelihoodratio}
\end{equation}%
In other words, for $j=0$, $\varphi _{y_{k}}^{(j)}(x,\ell )$ is equal to the
image SNR (\ref{eq:SNRV}) scaled by the mis-detection probability, otherwise
it is equal to the detection SNR (\ref{eq:SNRD}) scaled by the detection
probability.\newline
\indent Given a state $(x,\ell )$, $\varphi _{y_{k_{\!}}}^{(\theta (\ell
))}(x,\ell ) $ plays the same role as $\psi _{Z_{k_{\!}}}^{(\theta (\ell
))}(x,\ell )$, but accommodates both detection measurements and image
measurements. Moreover, since each object generates observation
independently from each other, the hybrid multi-object likelihood function
has the same form as (\ref{eq:RFSmeaslikelihood0}), but with $\psi
_{D(y_{k})}^{(\theta (\ell ))}(x,\ell )$ replaced by $\varphi
_{y_{k_{\!}}}^{(\theta (\ell ))}(x,\ell )$, i.e. 
\begin{equation}
g_{k}(y_{k}|\mathbf{X}_{k})\propto \sum_{\theta \in \Theta _{k}(\mathcal{L(}%
\mathbf{X}_{k}))}\prod\limits_{(x,\ell )\in \mathbf{X}_{k}}\varphi
_{y_{k_{\!}}}^{(\theta (\ell ))}(x,\ell ).  \label{eq:hybridlikelihood}
\end{equation}

In visual occlusions, the hybrid likelihood allows occluded objects to share
the image observations of the objects that occlude them. Moreover, when
integrated into the Bayes recursion (\ref{Chap_Kolmo_LRFS})-(\ref%
{Bayesupdate_LRFS}), consideration for a track-length-dependent survival
probability in combination with information from the image observation,
reduces uncertainties in the states of occluded objects and maintains their
existence probabilities to keep the tracks alive. Hence, hi-jacking and
premature track termination in longer occlusions will be avoided.

Remark: The hybrid multi-object likelihood function (\ref%
{eq:hybridlikelihood}) is a generalization of both the standard multi-object
likelihood and the separable likelihood in \cite{Vo2010}. When $%
P_{\!D,k}(x,\ell )=1$ for each $(x,\ell )\in \mathbf{X}_{k}$, i.e. there is
no mis-detection, the hybrid likelihood (\ref{eq:hybridlikelihood}) is the
same as the standard likelihood (\ref{eq:RFSmeaslikelihood0}). On the other
hand, if $P_{\!D,k}(x,\ell )=0$ for each $(x,\ell )\in \mathbf{X}_{k}$, i.e.
there is no detection, then the only non-zero term in the hybrid likelihood (%
\ref{eq:hybridlikelihood}) is one with $\theta (\ell )=0$ for all $\ell \in 
\mathcal{L(}\mathbf{X}_{k})$. In this case, the hybrid likelihood (\ref%
{eq:hybridlikelihood}) reduces to the separable likelihood in \cite{Vo2010}.
For a general detection profile $P_{\!D,k}$, the hybrid likelihood (\ref%
{eq:hybridlikelihood}) reduces to the standard likelihood (\ref%
{eq:RFSmeaslikelihood0}) when $\sigma _{T,k}(T(y_{k})|x,\ell )=1$ for each $%
(x,\ell )\in \mathbf{X}_{k}$.\newline
\indent Note that a hybrid likelihood function can be also developed for the
merged-measurement model. However, the resulting multi-object filter still
suffers from the same intractability as the merged-measurement filter.

\begin{figure}[tbp]
	\begin{center}
		\includegraphics[height=.45\linewidth]{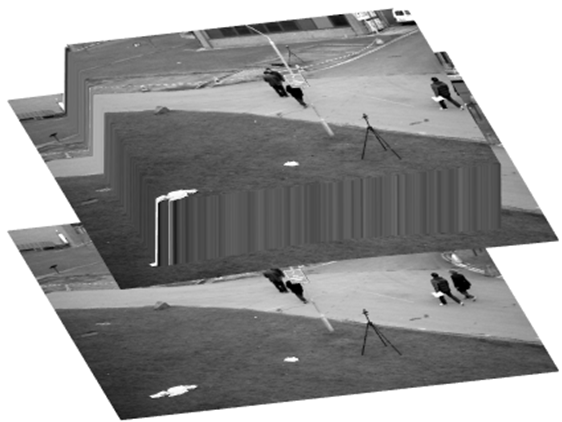} 
	\end{center}
	\caption{Example of scene mask for the proposed probability of survival}
	\label{Fig: mask}
\end{figure}

\subsection{Death model\label{subsec:Death}}

In most video MOT applications, if an object stays in the scene for a long
time, then it is more likely to continue to do so, provided it is not close
to the designated exit regions. Such prior empirical knowledge can be used
to improve occlusion handling, especially long occlusions that can lead to
premature track termination in on-line MOT algorithms. In general, the GLMB
filter would terminate an object that has not been detected over several
frames. However, if this object has been in the scene for some time and is
not in the proximity of designated exit regions, then it is highly likely to
be occluded and track termination should be delayed. The labeled RFS
formulation enables such prior information to be incorporated into track
termination in a principled manner, via the survival probability.

The labeled RFS formulation accommodates survival probabilities that depend
on track lengths since a labeled state contains the time of birth in its
label, and the track length is simply the difference between the current
time and the time of birth. In practice, it is unlikely for an object to
disappear in the middle of the visual scene (even if mis-detected or
occluded) whereas it is more likely to disappear near designated exit
regions due to the scene structure (e.g. the borders of the scene). Hence,
we require the survival probability to be large (close to one) in the middle
of the scene and small (close to zero) on the edges or designated death
regions. Furthermore, since objects staying in the scene for a long time are
more certain to continue existing, we require the survival probability to
increase to one as its track length increases.

An explicit form of the survival probability that satisfies these
requirements is given by 
\begin{equation}
P_{S,k}(x,\ell )=\frac{b(x)}{1+\text{exp}(-\gamma (k-\ell \lbrack 1,0]^{%
\text{T}}))}  \label{surv_prob}
\end{equation}%
where $b(x)$ is a scene mask that represents the scene structure, e.g.,
entrance or exit as illustrated in Fig. \ref{Fig: mask}, $\gamma $ is a
control parameter of the sigmoid function. The scene mask $b(x)$ can be
learned from a set of training data or designed from the known scene
structure.

\subsection{GLMB Recursion\label{subsec:VGLMB}}

A GLMB density can be written in the following form 
\begin{equation}
\mathbf{\pi }(\mathbf{X})=\Delta (\mathbf{X})\sum_{\xi \in \Xi
}\sum_{I\subseteq \mathbb{L}}\omega ^{(I,\xi )}\delta _{I}[\mathcal{L(}%
\mathbf{X})]\left[ p^{(\xi )}\right] ^{\mathbf{X}},  \label{eq:delta-glmb}
\end{equation}
where each $\xi \in \Xi $ represents a history of association maps $\xi
=(\theta _{1:k})$, each $p^{(\xi )}(\cdot ,\ell )$ is a probability density
on $\mathbb{X}$, and each $\omega ^{(I,\xi )}$ is non-negative with $%
\sum_{\xi \in \Xi }\sum_{I\subseteq \mathbb{L}}\omega ^{(I,\xi )}=1$. The
cardinality distribution of a GLMB is given by%
\begin{equation}
\Pr (\left\vert \mathbf{X}\right\vert =n)=\sum_{\xi \in \Xi
}\sum_{I\subseteq \mathbb{L}}\delta _{n}\left[ \left\vert I\right\vert %
\right] \omega ^{(I,\xi )},  \label{eq:GLMBCard}
\end{equation}%
while, the existence probability and probability density of track $\ell \in 
\mathbb{L}$ are respectively\allowdisplaybreaks%
\begin{align}
r(\ell )& =\sum_{\xi \in \Xi }\sum_{I\subseteq \mathbb{L}}1_{I}(\ell )\omega
^{(I,\xi )}, \\
p(x,\ell )& =\frac{1}{r(\ell )}\sum_{\xi \in \Xi }\sum_{I\subseteq \mathbb{L}%
}1_{I}(\ell )\omega ^{(I,\xi )}p^{(\xi )}(x,\ell ).
\end{align}

Given the GLMB density (\ref{eq:delta-glmb}), an intuitive multi-object
estimator is the \emph{multi-Bernoulli estimator}, which first determines
the set of labels $L\subseteq $ $\mathbb{L}$ with existence probabilities
above a prescribed threshold, and second the MAP/mean estimates from the
densities $p(\cdot ,\ell ), \ell \in L$, for the states of the objects. A
popular estimator is a suboptimal version of the Marginal Multi-object
Estimator \cite{Mahler_book07}, which first determines the pair $(L,\xi )$
with the highest weight $\omega ^{(L,\xi )}$ such that $\left\vert
L\right\vert $ coincides with the MAP cardinality estimate, and second the
MAP/mean estimates from $p^{(\xi )}(\cdot ,\ell ),\ell \in L$, for the
states of the objects.

The GLMB family enjoys a number of nice analytical properties. The void
probability functional--a necessary and sufficient statistic--of a GLMB, the
Cauchy-Schwarz divergence between two GLMBs, the $L_{1}$-distance between a
GLMB and its truncation, can all be computed in closed form \cite{VVP14}.
The GLMB is flexible enough to approximate any labeled RFS density with
matching intensity function and cardinality distribution \cite{Papi_etal15}.
More importantly, the GLMB family is closed under the prediction equation (%
\ref{Chap_Kolmo_LRFS}) and conjugate with respect to the standard
observation likelihood \cite{VV13}.\newline
\indent In the following we show that the GLMB family is conjugate with
respect to the hybrid observation likelihood function. Hence, starting from
an initial GLMB prior, all multi-object predicted and updated densities
propagated by the Bayes recursion (\ref{Chap_Kolmo_LRFS})-(\ref%
{Bayesupdate_LRFS}) are GLMBs. For notational compactness, we drop the
subscript $k$ for the current time, the next time is indicated by the
subscript `$+$'.

\begin{proposition}
\label{PropBayes_strong}Suppose that the multi-object prediction density to
time $k+1$ is a\ GLMB of the form 
\begin{equation}
\mathbf{\bar{\pi}}_{_{\!}+}(_{_{\!}}\mathbf{X}_{+})=\Delta _{{\!}}(\mathbf{X}%
_{+_{\!}})\sum_{\xi ,I_{+}}\bar{\omega}_{+}^{(\xi ,I_{+\!})}\delta
_{_{\!}I_{+\!}}[\mathcal{L}(\mathbf{X}_{+})]\!\left[ \bar{p}_{+}^{(\xi )}%
\right] ^{\mathbf{X}_{+}},  \label{eq:GLMB_pred0}
\end{equation}%
where $\xi \in \Xi $, $I_{+}\in \mathcal{F}(\mathbb{L}_{+})$. Then under the
hybrid observation likelihood function (\ref{eq:hybridlikelihood}), the
filtering density at time $k+1$ is a GLMB of the form%
\begin{equation}
\mathbf{\pi }_{\!y_{_{\!}+}\!}(\mathbf{X}_{+})\propto \Delta _{\!}(\mathbf{X}%
_{+\!})\sum_{\xi ,I_{+},\theta _{+}}\omega _{y_{_{\!}+}}^{(_{\!}\xi
,I_{+\!},\theta _{\!+\!})}\delta _{I_{+\!\!}}[\mathcal{L}(\mathbf{X}_{+})]\!%
\left[ p_{+}^{(\xi _{\!},\theta _{\!+})\!}\right] ^{\mathbf{X}_{+}}\!,
\label{eq:GLMB_upd0}
\end{equation}%
where $\theta _{+}\in \Theta _{+}$, and \allowdisplaybreaks%
\begin{eqnarray}
\!\!\!\!\!\!\!\!\omega _{y_{_{\!}+}}^{(_{\!}\xi ,I_{+\!},\theta
_{\!+\!})}\!\!\! &=&\!\!\!\bar{\omega}_{+}^{(\xi ,I_{+\!})}1_{\Theta
_{+}(_{\!}I_{+\!})}(\theta _{+})\left[ \bar{\varphi}_{y_{+}}^{(\xi ,\theta
_{_{\!}+})}\right] ^{I_{+\!}},  \label{eq:GLMB_upd1} \\
\!\!\!\!\!\!\!\!\bar{\varphi}_{y_{+}}^{(\xi ,\theta _{_{\!}+})}(\ell
_{_{\!}+_{\!}})\!\!\! &=&\!\!\!\left\langle \bar{p}_{+}^{(\xi )}(\cdot ,\ell
_{+}),\varphi _{y_{+_{\!}}}^{(\theta _{+}(\ell _{+}))}(\cdot ,\ell
_{+})\right\rangle,  \label{eq:GLMB_upd2} \\
\!\!\!\!\!\!\!\!p_{_{+}}^{(\xi ,\theta _{_{\!}+_{\!}})_{\!}}(x_{_{\!}+},\ell
_{_{\!}+_{\!}})\!\!\! &=&\!\!\!\frac{\bar{p}_{+}^{(\xi )}(x_{+},\ell
_{+})\varphi _{y_{+_{\!}}}^{(\theta _{+}(\ell _{+}))}(x_{+},\ell _{+})}{\bar{%
\varphi}_{y_{+}}^{(\xi ,\theta _{_{\!}+})}(\ell _{+})}.  \label{eq:GLMB_upd3}
\end{eqnarray}
\end{proposition}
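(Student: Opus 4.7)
The plan is a direct conjugacy calculation in the same spirit as the standard-likelihood GLMB proof in \cite{VV13,VVP14}: substitute the predicted GLMB (\ref{eq:GLMB_pred0}) together with the hybrid likelihood (\ref{eq:hybridlikelihood}) into the Bayes update (\ref{Bayesupdate_LRFS}) and rearrange until the functional form of (\ref{eq:GLMB_upd0}) is manifest, so that the weights and single-object densities can be read off as (\ref{eq:GLMB_upd1})--(\ref{eq:GLMB_upd3}).

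First I would form the pointwise product $g_{+}(y_+|\mathbf{X}_+)\,\bar{\boldsymbol{\pi}}_+(\mathbf{X}_+)$. The factor $\Delta(\mathbf{X}_+)\,\delta_{I_+}[\mathcal{L}(\mathbf{X}_+)]$ from the GLMB prior annihilates every summand unless the labels of $\mathbf{X}_+$ are distinct and equal to $I_+$, which lets me replace the label-dependent sum $\sum_{\theta\in\Theta_+(\mathcal{L}(\mathbf{X}_+))}$ in (\ref{eq:hybridlikelihood}) by an unconstrained $\sum_{\theta_+\in\Theta_+}$ together with the indicator $1_{\Theta_+(I_+)}(\theta_+)$. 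The per-state product $\prod_{(x,\ell)\in\mathbf{X}_+}\varphi_{y_+}^{(\theta_+(\ell))}(x,\ell)$ is, by definition of the multi-object exponential, equal to $\bigl[\varphi_{y_+}^{(\theta_+(\cdot))}\bigr]^{\mathbf{X}_+}$, which combines with the prediction factor $\bigl[\bar{p}_+^{(\xi)}\bigr]^{\mathbf{X}_+}$ into the single multi-object exponential $\bigl[\bar{p}_+^{(\xi)}\varphi_{y_+}^{(\theta_+(\cdot))}\bigr]^{\mathbf{X}_+}$. At this point the unnormalized posterior already fits the GLMB template of (\ref{eq:GLMB_upd0}), except that its per-track factor is not yet a probability density in $x$.

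To turn that factor into a density I would multiply and divide each track by $\bar{\varphi}_{y_+}^{(\xi,\theta_+)}(\ell)$ as defined in (\ref{eq:GLMB_upd2}); the per-track quotient is exactly $p_+^{(\xi,\theta_+)}(x,\ell)$ in (\ref{eq:GLMB_upd3}), while the label-product of the normalizing constants gives the label-indexed multi-object exponential $\bigl[\bar{\varphi}_{y_+}^{(\xi,\theta_+)}\bigr]^{I_+}$. Together with the prior weight $\bar{\omega}_+^{(\xi,I_+)}$ and the indicator $1_{\Theta_+(I_+)}(\theta_+)$ this is precisely the unnormalized weight $\omega_{y_+}^{(\xi,I_+,\theta_+)}$ in (\ref{eq:GLMB_upd1}); the Bayes denominator in (\ref{Bayesupdate_LRFS}) is absorbed into the overall $\propto$, yielding (\ref{eq:GLMB_upd0}).

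The main obstacle will be the bookkeeping of the association map rather than any analytic difficulty. In particular, one has to justify carefully that the sum $\sum_{\theta\in\Theta_+(\mathcal{L}(\mathbf{X}_+))}$ coming from the likelihood can be replaced by $\sum_{\theta_+\in\Theta_+}1_{\Theta_+(I_+)}(\theta_+)$ without double-counting and without violating the positive 1-1 property of $\theta_+$; this is where the distinct-label indicator $\Delta(\mathbf{X}_+)$ and the Kronecker $\delta_{I_+}[\mathcal{L}(\mathbf{X}_+)]$ do all the real work. Once that bookkeeping is settled, the argument is structurally identical to the standard-likelihood GLMB conjugacy proof because the hybrid likelihood (\ref{eq:hybridlikelihood}) has exactly the same form as (\ref{eq:RFSmeaslikelihood0}) (a sum over positive 1-1 maps of a product of per-object factors) with $\varphi$ simply replacing $\psi$, so no new analytic step is required.
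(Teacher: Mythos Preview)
Your proposal is correct and matches the paper's own proof essentially step for step: the paper likewise rewrites the hybrid likelihood as $\sum_{\theta_+}1_{\Theta_+(\mathcal{L}(\mathbf{X}_+))}(\theta_+)[\tilde{\varphi}_{y_+}^{(\theta_+)}]^{\mathbf{X}_+}$, multiplies by the GLMB prior, uses $\delta_{I_+}[\mathcal{L}(\mathbf{X}_+)]$ to replace $\mathcal{L}(\mathbf{X}_+)$ by $I_+$ in the indicator, and then multiplies and divides by $\bar{\varphi}_{y_+}^{(\xi,\theta_+)}$ to extract the normalized per-track density and the weight. Your observation that the argument is structurally identical to the standard-likelihood conjugacy proof with $\varphi$ replacing $\psi$ is exactly the point.
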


\begin{proof}
Note that the likelihood function (\ref{eq:hybridlikelihood}) at time $k+1$
can be written as%
\begin{equation*}
g_{+}(y_{+}|\mathbf{X}_{+})\propto \sum_{\theta _{+}}1_{\Theta _{+}(\mathcal{%
L(}\mathbf{X}_{+}))}(\theta _{+})\left[ \tilde{\varphi}_{y_{+_{\!}}}^{(%
\theta _{+})}\right] ^{\mathbf{X}_{+}},
\end{equation*}%
where $\tilde{\varphi}_{y_{+_{\!}}}^{(\theta _{+})}(x,\ell )\triangleq
\varphi _{y_{+_{\!}}}^{(\theta _{+}(\ell ))}(x,\ell )$. Using Bayes rule%
\begin{eqnarray*}
\mathbf{\pi }_{\!y_{_{\!}+}\!}(\mathbf{X}_{+}) &=&\mathbf{\bar{\pi}}%
_{_{\!}+}(_{_{\!}}\mathbf{X}_{+})g_{+}(y_{+}|\mathbf{X}_{+}) \\
&\propto &\Delta _{\!}(\mathbf{X}_{+})\mathbf{\!}\sum_{\xi ,I_{+}}\!\bar{%
\omega}_{+}^{(\xi ,I_{+\!})}\delta _{_{\!}I_{+\!}}[\mathcal{L}(\mathbf{X}%
_{+})]\!\left[ \bar{p}_{+}^{(\xi )}\right] ^{\mathbf{X}_{+}} \\
&&\times \sum_{\theta _{+}}1_{\Theta _{+}(\mathcal{L(}\mathbf{X}%
_{+}))}(\theta _{+})\left[ \tilde{\varphi}_{y_{+_{\!}}}^{(\theta _{+})}%
\right] ^{\mathbf{X}_{+}} \\
&=&\Delta _{\!}(\mathbf{X}_{+})\mathbf{\!}\sum_{\xi ,I_{+},\theta _{+}}\bar{%
\omega}_{+}^{(\xi ,I_{+\!})}1_{\Theta _{+}(_{\!}I_{+\!})}(\theta _{+}) \\
&&\times \delta _{_{\!}I_{+\!}}[\mathcal{L}(\mathbf{X}_{+})]\!\left[ \bar{p}%
_{+}^{(\xi )}\tilde{\varphi}_{y_{+_{\!}}}^{(\theta _{+})}\right] ^{\mathbf{X}%
_{+}} \\
&=&\Delta _{\!}(\mathbf{X}_{+})\mathbf{\!}\sum_{\xi ,I_{+},\theta _{+}}\!%
\bar{\omega}_{+}^{(\xi ,I_{+\!})}1_{\Theta _{+}(_{\!}I_{+\!})}(\theta _{+})
\\
&&\times \delta _{_{\!}I_{+\!}}[\mathcal{L}(\mathbf{X}_{+})]\!\left[ \bar{%
\varphi}_{y_{+}}^{(\xi ,\theta _{_{\!}+})}\right] ^{\mathcal{L}(\mathbf{X}%
_{+})}\!\!\left[ \frac{\bar{p}_{+}^{(\xi )}\tilde{\varphi}%
_{y_{+_{\!}}}^{(\theta _{+})}}{\bar{\varphi}_{y_{+}}^{(\xi ,\theta
_{_{\!}+})}}\right] ^{\mathbf{X}_{+}} \\
&=&\Delta _{\!}(\mathbf{X}_{+})\mathbf{\!}\sum_{\xi ,I_{+},\theta _{+}}\!%
\bar{\omega}_{+}^{(\xi ,I_{+\!})}1_{\Theta _{+}(_{\!}I_{+\!})}(\theta _{+})%
\left[ \bar{\varphi}_{y_{+}}^{(\xi ,\theta _{_{\!}+})}\right] ^{I_{+\!}} \\
&&\times \delta _{I_{+\!\!}}[\mathcal{L}(\mathbf{X}_{+})]\!\left[
p_{+}^{(\xi _{\!},\theta _{\!+})\!}\right] ^{\mathbf{X}_{+}}.
\end{eqnarray*}
\end{proof}

In this work we adopt the joint prediction and update strategy \cite{VVH}
for the proposed video MOT GLMB filter. Using the same line of arguments as
in \cite{VVH}, yields the following recursion

\begin{proposition}
Given the GLMB filtering density (\ref{eq:delta-glmb}) at time $k$, the
filtering density at time $k+1$ is: \allowdisplaybreaks%
\begin{equation}
\!\!\mathbf{\pi }_{\!+\!}(\mathbf{X})\!\propto \Delta _{\!}(_{\!}\mathbf{X}%
_{\!})\!\!\!\sum\limits_{I_{\!},\xi ,I_{\!+\!},\theta _{\!+\!\!}}\!\!\omega
^{(I,\xi )}\omega _{y_{_{\!}+}}^{(_{\!}I_{\!},\xi ,I_{\!+\!},\theta
_{\!+\!})}\delta _{_{\!}I_{+}}[\mathcal{L}(_{\!}\mathbf{X}_{\!})]\!\left[
p_{y_{_{\!}+_{\!}}}^{(_{\!}\xi ,\theta _{\!+\!})}{}_{\!}\right] ^{\!\mathbf{X%
}}\!,  \label{eq:GLMB_joint0}
\end{equation}%
where $I\in \mathcal{F}(\mathbb{L})$, $\xi \in \Xi $, $I_{+}\in \mathcal{F}(%
\mathbb{L}_{+})$, $\theta _{+}\in {\Theta }_{+}(I_{+})$, and%
\allowdisplaybreaks%
\begin{eqnarray}
\!\!\!\!\!\!\omega _{y_{_{\!}+}}^{(_{\!}I_{\!},\xi ,I_{\!+\!},\theta
_{\!+\!})}\!\!\! &=&\!\!\!\left[ 1-\bar{P}_{S}^{(\xi )}\right]
^{\!I-I_{\!+}}\!\left[ \bar{P}_{S\!}^{(\xi )}\right] ^{\!I\cap I_{+\!}} 
\notag \\
\!\!\!\! &\times &\!\!\!\left[ 1-r_{B\!,+}\right] ^{\mathbb{B}%
_{\!+}-I_{\!+}}r_{B\!,+}^{\mathbb{B}_{\!{+}}\cap I_{+}}\!\left[ \bar{\varphi}%
_{y_{+}}^{(\xi ,\theta _{_{\!}+\!})}\right] ^{I_{+}},  \label{eq:GLMB_joint1}
\\
\!\!\!\!\!\!\bar{P}_{S}^{(\xi )}(\ell )\!\!\! &=&\!\!\!\left\langle p^{(\xi
)}(\cdot ,\ell ),P_{S}(\cdot ,\ell )\right\rangle,  \label{eq:GLMB_joint2} \\
\!\!\!\!\!\!\bar{\varphi}_{y_{+}}^{(\xi ,\theta _{_{\!}+\!})}(\ell
_{_{\!}+})\!\!\! &=&\!\!\!\left\langle \bar{p}_{+}^{(\xi )}(\cdot ,\ell
_{_{\!}+}),\varphi _{y_{+}}^{(\theta _{_{\!}+}(\ell _{_{\!}+}))}(\cdot ,\ell
_{_{\!}+})\right\rangle,  \label{eq:GLMB_joint3} \\
\!\!\!\!\!\!\bar{p}_{+}^{(\xi )_{\!}}(x_{_{\!}+},\ell _{_{\!}+})\!\!\!
&=&\!\!\!1_{\mathbb{L}_{\!}}(\ell _{_{\!}+\!})\frac{\!\left\langle
P_{S}(\cdot ,\ell _{_{\!}+\!})f_{_{\!}+\!}(x_{_{\!}+}|\cdot ,\ell
_{_{\!}+\!}),p^{(\xi )}(\cdot ,\ell _{_{\!}+\!})\right\rangle }{\bar{P}%
_{S}^{(\xi )}(\ell _{_{\!}+})}  \notag \\
\!\!\!\! &+&\!\!\!1_{\mathbb{B}_{+}}\!(\ell
_{_{\!}+_{\!}})p_{B,+_{_{\!}}}(x_{_{\!}+},\ell _{_{\!}+}),
\label{eq:GLMB_joint4} \\
\!\!\!\!\!\!p_{+}^{(\xi _{\!},\theta _{\!+})\!}(x_{_{\!}+},\ell
_{_{\!}+})\!\!\! &=&\!\!\!\frac{\bar{p}_{+}^{(\xi )}(x_{_{\!}+},\ell
_{_{\!}+})\varphi _{y_{+}}^{(\theta _{_{\!}+}(\ell
_{_{\!}+}))}(x_{_{\!}+},\ell _{_{\!}+})}{\bar{\varphi}_{y_{+}}^{(\xi ,\theta
_{_{\!}+})}(\ell _{_{\!}+})}.  \label{eq:GLMB_joint5}
\end{eqnarray}
\end{proposition}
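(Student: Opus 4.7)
The plan is to derive the joint recursion (\ref{eq:GLMB_joint0})--(\ref{eq:GLMB_joint5}) by composing the GLMB prediction with the GLMB update of Proposition~\ref{PropBayes_strong}, then collecting the intermediate indices into a single sum. First I would apply the Chapman--Kolmogorov equation (\ref{Chap_Kolmo_LRFS}) to the GLMB filtering density (\ref{eq:delta-glmb}) at time $k$, using the standard multi-object dynamic model described earlier in the paper (independent survivals with state-dependent probability $P_S$ and LMB births on $\mathbb{B}_+$). Standard GLMB prediction manipulations, already established in \cite{VV13,VVH}, yield a predicted GLMB of the form (\ref{eq:GLMB_pred0}) whose single-object densities are exactly (\ref{eq:GLMB_joint4}) (the Markov-transition mixture for surviving labels $\ell_+\in\mathbb{L}$ and the birth density $p_{B,+}$ for $\ell_+\in\mathbb{B}_+$), and whose weights decompose over the survivor/newborn split of $I_+\subseteq\mathbb{L}_+$ as a product of survival/death factors $[\bar P_S^{(\xi)}]^{I\cap I_+}[1-\bar P_S^{(\xi)}]^{I-I_+}$ and birth/no-birth factors $r_{B,+}^{\mathbb{B}_+\cap I_+}[1-r_{B,+}]^{\mathbb{B}_+-I_+}$, summed against the prior weight $\omega^{(I,\xi)}$.

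Once the predicted density is in GLMB form, the update is obtained by a direct invocation of Proposition~\ref{PropBayes_strong} with the hybrid likelihood (\ref{eq:hybridlikelihood}). This introduces an additional sum over positive $1$--$1$ maps $\theta_+\in\Theta_+(I_+)$, multiplies each component weight by $[\bar\varphi_{y_+}^{(\xi,\theta_+)}]^{I_+}$ as in (\ref{eq:GLMB_upd1}), and replaces each single-object density $\bar p_+^{(\xi)}(\cdot,\ell_+)$ by the Bayes-normalised form in (\ref{eq:GLMB_joint5}), which has the same structure as (\ref{eq:GLMB_upd3}). Multiplying the predicted weight by these update factors and identifying the combination as $\omega_{y_+}^{(I,\xi,I_+,\theta_+)}$ in (\ref{eq:GLMB_joint1}) then gives the expression (\ref{eq:GLMB_joint0}), with the outer index $I$ retained because the predicted weight inherits the prior label set from $\omega^{(I,\xi)}$.

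The main obstacle is the combinatorial bookkeeping in the prediction stage: one must carefully separate the labels in $I$ that survive to form $I_+\cap\mathbb{L}$ from those that die, ensure each death contributes a factor $1-\bar P_S^{(\xi)}(\ell)$, and then splice in the LMB birth component on $\mathbb{B}_+$ so that the result factorises over $I_+$ with the structure shown in (\ref{eq:GLMB_joint1}). A secondary subtlety is verifying that the set constraints $I_+\cap\mathbb{L}\subseteq I$ and $I_+\cap\mathbb{B}_+\subseteq\mathbb{B}_+$ emerge naturally from the multi-object exponential and indicator factors, so that the joint sum can be written as an unrestricted sum over $(I,\xi,I_+,\theta_+)$ as displayed in (\ref{eq:GLMB_joint0}). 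Once these accounting details are in place, the rest of the derivation is a mechanical extension of the proof of Proposition~\ref{PropBayes_strong}.
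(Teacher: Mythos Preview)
Your proposal is correct and follows essentially the same approach as the paper, which does not give a self-contained proof but simply invokes the joint prediction and update derivation of \cite{VVH} and observes that the same line of argument carries over with the hybrid likelihood (\ref{eq:hybridlikelihood}) in place of the standard one. Composing the GLMB prediction (survival/death plus LMB birth) with the update of Proposition~\ref{PropBayes_strong}, as you outline, is precisely the content of that reference, and the combinatorial bookkeeping you flag is exactly what is needed to recover (\ref{eq:GLMB_joint1}).
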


The summation in (\ref{eq:GLMB_joint0}) can be interpreted as an enumeration
of all possible combinations of births, deaths and survivals together with
associations of new measurements to hypothesized tracks. Observe that (\ref%
{eq:GLMB_joint0}) does indeed take on the same form as (\ref{eq:delta-glmb}%
) when rewritten as a sum over $I_{+},\xi ,\theta _{+}$ with weights 
\begin{equation}
\omega _{+}^{(I_{+},\xi ,\theta _{+})}\propto \sum\limits_{I}\omega ^{(I,\xi
)}\omega _{y_{+}}^{(I,\xi ,I_{+},\theta _{+})}.  \label{eq:GLMB_joint6}
\end{equation}%
Hence at the next iteration we only propagate forward the components $%
(I_{+},\xi ,\theta _{+})$ with weights $\omega _{+}^{(I_{+},\xi ,\theta
_{+})}$.\newline
Remark: It is also possible to approximate the resulting GLMB filtering
density by an LMB with matching 1st moment and cardinality distribution \cite%
{Reuter14}. This so-called LMB filtering strategy reduces considerable
computations since an LMB is a GLMB with one term. However, tracking
performance tend to degrade, especially in scenarios with many closely space
objects.

Note that for high SNR scenarios the detection probability is high, hence
the recursion (\ref{eq:GLMB_joint0})-(\ref{eq:GLMB_joint6}) would process
detections mostly. On the other hand when the detection probability is low
it would process the image mostly. In practice the SNR varies between
different regions in the observation space as well as with time, the
recursion (\ref{eq:GLMB_joint0})-(\ref{eq:GLMB_joint6}) adaptively processes
detections and image data to improve performance while reducing the
computational cost.

\subsection{GLMB Filter Implementation\label{subsec:Implementation}}

The number of terms in the GLMB filtering density grows super-exponentially,
and it is necessary to truncate these terms without exhaustive enumeration.
A two-stage implementation of the GLMB filter truncates the prediction and
filtering densities using the K-shortest path and the ranked assignment
algorithms, respectively \cite{VVP14}. In \cite{VVH} the joint prediction
and update was designed to improve the truncation efficiency of the
two-staged implementation. Further, the GLMB truncation can be performed via
Gibbs sampling with linear complexity in the number of detections (the
reader is referred to \cite{VVH} for derivations and analysis).
Fortuitously, this implementation can be readily adapted for the video MOT
GLMB filter (\ref{eq:GLMB_joint0})-(\ref{eq:GLMB_joint6}).

The GLMB filtering density (\ref{eq:delta-glmb}) at time $k$ is completely
characterized by the parameters $(\omega ^{(I,\xi )},p^{(\xi )})$, $(I,\xi
)\in \mathcal{F}\!(\mathbb{L})\!\times \!\Xi $, which can be enumerated as $%
\{(I^{(h)},\xi ^{(h)},\omega ^{(h)},p^{(h)})\}_{h=1}^{H}$, where%
\begin{equation*}
\omega ^{(h)}\triangleq \omega ^{(I^{(h)},\xi ^{(h)})},\;p^{(h)}\triangleq
p^{(\xi ^{(h)})}.
\end{equation*}%
Since (\ref{eq:delta-glmb}) can now be rewritten as 
\begin{equation*}
\mathbf{\pi }(\mathbf{X})=\Delta (\mathbf{X})\sum_{h=1}^{H}\omega
^{(h)}\delta _{I^{(h)}}[\mathcal{L(}\mathbf{X})]\left[ p^{(h)}\right] ^{%
\mathbf{X}},
\end{equation*}%
implementing the GLMB filter amounts to propagating the component set $%
\{(I^{(h)},\omega ^{(h)},p^{(h)})\}_{h=1}^{H}$ (there is no need to store $%
\xi ^{(h)}$) forward in time using (\ref{eq:GLMB_joint0})-(\ref%
{eq:GLMB_joint6}). The procedure for computing the component set $%
\{(I_{+}^{(h_{+})},\omega
_{+}^{(h_{+})},p_{+}^{(h_{+})})\}_{h_{+}=1}^{H_{+}} $ at the next time is
summarized in Algorithm 1. Note that to be consistent with the indexing by $%
h $ instead of $(I,\xi )$, we also abbreviate%
\begin{eqnarray}
\!\!\!\!\!\bar{P}_{S}^{(h)}(\ell _{i})\!\! &\triangleq &\!\!\bar{P}%
_{S}^{(\xi ^{(h)})\!}(\ell _{i}),\;\;\bar{p}_{+}^{(h)}(x,\ell
_{i})\triangleq \bar{p}_{+}^{(\xi ^{(h)})_{\!}}(x,\ell _{i}),  \notag \\
\!\!\!\!\!\bar{\varphi}_{y_{+}}^{(h,j)}(\ell _{i})\!\! &\triangleq
&\!\!\left\langle \bar{p}_{+}^{(h)_{\!}}(\cdot ,\ell _{i}),\varphi
_{y_{+}}^{(j)}(\cdot ,\ell _{i})\right\rangle ,  \notag \\
\!\!\!\!\!\eta _{i}^{(h)}(j)\!\! &\triangleq &\!\!%
\begin{cases}
1-\bar{P}_{S}^{(h)}(\ell _{i}), & \!\ell _{i\!}\in I^{(h)},\text{ }j\!<0\!,
\\ 
\bar{P}_{S}^{(h)}(\ell _{i})\bar{\varphi}_{y_{+}}^{(h,j)}(\ell _{i}), & 
\!\ell _{i\!}\in I^{(h)},\text{ }j\!\geq 0, \\ 
1-r_{B\!,+}(\ell _{i}), & \!\ell _{i\!}\in \mathbb{B}_{+},\text{ }j\!<0, \\ 
r_{B\!,+}(\ell _{i})\bar{\varphi}_{y_{+}}^{(h,j)}(\ell _{i}), & \!\ell
_{i\!}\in \mathbb{B}_{+},\text{ }j\!\geq 0.%
\end{cases}
\label{eq:eta}
\end{eqnarray}

\hrule

\vspace{5pt}

\textbf{Algorithm 1. Joint Prediction and Update}

\begin{itemize}
\item {\footnotesize \textsf{input: }}$\{(I^{(h)},\omega
^{(h)},p^{(h)})\}_{h=1}^{H}$, $y_{+}$, $Z_{+}$, $H_{+}^{\max }$,

\item {\footnotesize \textsf{input: }}$\{(r_{\!B\!,+}^{(\ell
)},p_{B\!,+}^{(\ell )})\}_{\ell \in \mathbb{B}_{\!+}}$, $P_{S\!}$, $%
f_{\!+\!} $, $\sigma_{\!D,+\!}$, $\sigma _{\!T,+\!}$,

\item \textsf{{\footnotesize {output: }}}$\{(I_{+}^{(h_{+})},\omega
_{+}^{(h_{+})},p_{+}^{(h_{+})})\}_{h_{+}=1}^{H_{+}}$
\end{itemize}

\vspace{2pt}

\hrule

\vspace{2pt}

\textsf{{\footnotesize {sample counts }}}$[T_{+}^{(h)}]_{h=1}^{H}$\textsf{\ 
{\footnotesize {\ \ from\textsf{\ multinomial} distribution}}}

\textsf{{\footnotesize {with parameters }}}$H_{+}^{\max }$\textsf{\ 
{\footnotesize {\ trials and weights }}}${[}\omega ^{(h)}]_{h=1}^{H}$

\textsf{{\footnotesize {for}}}\textsf{\ }$h=1:H$

\quad \textsf{{\footnotesize {compute }}}$\eta ^{(h)}:=[\eta
_{i}^{(h)}(j)]_{(i,j)=(1,-1)}^{(|I^{(h)}\!\cup \!\mathbb{B}_{+}|,|Z_{+}|)}$
\ \textsf{{\footnotesize {using }}}(\ref{eq:eta})

\quad \textsf{{\footnotesize {initialize }}}$\gamma ^{(h,1)}$

\quad $\{\gamma ^{(h,t)}\}_{_{t=1}}^{\tilde{T}_{+}^{(h)}}:=\mathsf{\
Unique(Gibbs}(\gamma ^{(h,1)},T_{+}^{(h)},\eta ^{(h)}));$

\quad \textsf{{\footnotesize {for }}}$t=1:\tilde{T}_{+}^{(h)}$

\quad \quad \textsf{{\footnotesize {compute }}}$I_{+}^{(h,t)}$\textsf{\ 
{\footnotesize {\ from }}}$I^{(h)}$\textsf{{\footnotesize {\ and }}}$\gamma
^{(h,t)}$\textsf{{\footnotesize {\ using }}}(\ref{eq:Iplus})

\quad \quad \textsf{{\footnotesize {compute }}}$\omega _{+}^{(h,t)}$\textsf{%
\ {\footnotesize {\ from }}}$\omega ^{(h)}$\textsf{{\footnotesize {\ and }}}$%
\gamma ^{(h,t)}$\textsf{{\footnotesize {\ using }}}(\ref{eq:wplus})

\quad \quad \textsf{{\footnotesize {compute }}}$p_{+}^{(h,t)}$\ \textsf{\ 
{\footnotesize {from }}}$p^{(h)}$\textsf{{\footnotesize {\ and }}}$\gamma
^{(h,t)}$\textsf{{\footnotesize {\ using}}} (\ref{eq:pplus})

\quad \textsf{{\footnotesize {end}}}

\textsf{{\footnotesize {end}}}

$(\{(I_{+}^{(h_{+})}\!,p_{+}^{(h_{+})})\}_{h_{+}=1}^{H_{+}},\sim ,[U_{h,t}])$

\quad \quad \quad \quad \quad \quad \quad \quad $:=\mathsf{Unique}%
(\{(I_{+}^{(h,t)}\!,p_{+}^{(h,t)})\}_{(h,t)=(1,1)}^{(H,\tilde{T}
_{+}^{(h)})});$

\textsf{{\footnotesize {for }}}$h_{+}=1:H_{+}$

\quad $\omega _{+}^{(h_{+})}:=\sum\limits_{h,t:U_{h,t}=h_{+}}\omega
_{+}^{(h,t)};$

\textsf{{\footnotesize {end}}}

\textsf{{\footnotesize {normalize weights }}}$\{\omega
_{+}^{(h_{+})}\}_{h_{+}=1}^{H_{+}}$

\vspace{3pt}

\hrule

\vspace{2pt}

\vspace{5pt}

\hrule

\vspace{3pt}

\vspace{5pt}

\textbf{Algorithm 1a. Gibbs }

\begin{itemize}
\item \textsf{{\footnotesize {input: }}}$\gamma ^{(1)},T,\eta =[\eta
_{i}(j)] $

\item \textsf{{\footnotesize {output: }}}$\gamma ^{(1)},...,\gamma ^{(T)}$
\end{itemize}

\vspace{2pt}

\hrule

\vspace{7pt}

$P:=\mathsf{size}(\eta ,1);\quad M:=\mathsf{size}(\eta ,2)-2;\quad
c:=[-1:M]; $

\textsf{{\footnotesize {for}}\ }$t=2:T$

\quad $\gamma ^{(t)}:=[$ $];$

\quad \textsf{{\footnotesize {for }}}$n=1:P$

\quad \quad \textsf{{\footnotesize {for }}}$j=1:M$

\quad \quad \quad $\eta_{n}(j):=\eta_{n}(j)(1-1_{\{\gamma
_{1:n-1}^{(t)},\gamma _{n+1:P}^{(t-1)}\}}(j));$

\quad \quad \textsf{{\footnotesize {end}}}

\quad \quad $\gamma _{n}^{(t)}\sim \mathsf{Categorical}(c,\eta_{n});$\quad $%
\gamma ^{(t)}:=[\gamma ^{(t)},\gamma _{n}^{(t)}];$

\quad \textsf{{\footnotesize {end}}}

\textsf{{\footnotesize {end}}}

\vspace{5pt} \hrule
\vspace{7pt}

There are three main steps in one iteration of the GLMB filter. First, the
Gibbs sampler (Algorithm 1a) is used to generate the \emph{auxiliary vectors}
$\gamma ^{(h,t)}$, $h=1$:$H$, $t=1$:$\tilde{T}_{+}^{(h)}$, with the most
significant weights $\omega _{+}^{(h,t)}$(note that $\gamma ^{(h,t)}$ is an
equivalent representation of the hypothesis $(I_{+}^{(h,t)},\theta
_{+}^{(h,t)})$,\ with components $\gamma _{i}^{(h,t)}$, $i=1$:$%
|I^{(h)}\!\cup \!\mathbb{B}_{+}|$, defined as $\theta _{+}^{(h,t)}(\ell
_{i}) $ when $\ell _{i}\in I^{(h)}$, and $-1$ otherwise \cite{VVH}). The
Gibbs sampler has an exponential convergence rate \cite{VVH}. More
importantly, it is not necessary to discard burn-ins and wait for samples
from the stationary distribution. All distinct samples can be used, the
larger the weights, the smaller the $L_{1}$ error from the true GLMB
filtering density \cite{VVH}.\newline
\indent Second, the auxiliary vectors are used to generate an intermediate
set of parameters with the most significant weights $(I^{(h)},I_{+}^{(h,t)},%
\omega _{+}^{(h,t)},p_{+}^{(h,t)})$, $h=1$:$H$, $t=1$:$\tilde{T}_{+}^{(h)}$,
via (\ref{eq:GLMB_joint0}). Note that given a component $h$ and $\gamma
^{(h,t)}$, it can be shown that \cite{VVH} 
\begin{eqnarray}
I_{+}^{(h,t)} &=&\{\ell _{i}\in I^{(h)}\cup \mathbb{B}_{\!{+}\!}:\gamma
_{i}^{(h,t)}\geq 0\},  \label{eq:Iplus} \\
\omega _{+}^{(h,t)} &\propto &\omega ^{(h)}\prod\limits_{i=1}^{|I^{(h)}\cup 
\mathbb{B}_{+}|}\eta _{i}^{(h)}(\gamma _{i}^{(h,t)}),  \label{eq:wplus} \\
p_{+}^{(h,t)\!}(\cdot ,\ell _{i}) &=&\frac{\bar{p}_{+}^{(h)}(\cdot ,\ell
_{i})\varphi _{y_{+}}^{(\gamma _{i}^{(h,t)})}(\cdot ,\ell _{i})}{\bar{\varphi%
}_{y_{+}}^{(h,\gamma _{i}^{(h,t)})}(\ell _{i})}.  \label{eq:pplus}
\end{eqnarray}%
Note also that $\theta _{+}^{(h,t)}(\ell _{i})=\gamma _{i}^{(h,t)}$ when $%
\gamma_{i}^{(h,t)}\geq 0$, for $\ell _{i}\in I_{+}^{(h,t)}$.

Third, the intermediate parameters are marginalized via (\ref{eq:GLMB_joint6}%
) to give the new parameter set $\{(I_{+}^{(h_{+})},\omega
_{+}^{(h_{+})},p_{+}^{(h_{+})})\}_{h_{+}=1}^{H_{+}}$. Note that $U_{h,t}$
gives the index of the GLMB component at time $k+1$ that $%
(I^{(h)},I_{+}^{(h,t)},p_{+}^{(h,t)})$ contributes to.\newline

\section{Experimental results\label{sec:Experiment}}

The proposed MOT filter is tested on a simulated TBD\ application in
subsection \ref{subsec:TBD}, and on real video data in subsection \ref%
{subsec:visual_tracking}.

\subsection{TBD\label{subsec:TBD}}

\subsubsection{Dynamic motion and observation model}

Consider a scenario with upto 5 objects, each with a 4D state $%
x_{k}=[~p_{x,k},\dot{p}_{x,k},p_{y,k},\dot{p}_{y,k}~]^{\text{T}}$ of
position and velocity. Each object follows a constant velocity model with
Gaussian transition density 
\begin{equation*}
f_{k|k-1}(x_{k}|x_{k-1})=\mathcal{N}(x_{k};Fx_{k-1},Q),
\end{equation*}%
where 
\begin{equation*}
F=I_{2}\otimes 
\begin{bmatrix}
1 & T_{s} \\[0.3em]
0 & 1%
\end{bmatrix}%
,
\end{equation*}%
$I_{2}$ is the 2 $\times $ 2 identity matrix, $\otimes $ denotes the
Kronecker product, $T_{s}$ is the sampling period of the video data, $%
Q=\sigma _{v}^{2}I_{2}$, and $\sigma _{v}=1$ pixels/frame is the noise
standard deviation.\newline
\indent The birth density is assumed to be LMB with 5 components of 0.03
birth probability and Gausssian distributed birth densities as 
\begin{equation*}
\begin{array}{lll}
\mathcal{N}(\cdot ,[5;0;5;0]^{T},P_{\gamma }),~~\mathcal{N}(\cdot
,[5;0;25;0]^{T},P_{\gamma }), &  &  \\ 
\mathcal{N}(\cdot ,[5;0;90;0]^{T},P_{\gamma }),~~\mathcal{N}(\cdot
,[90;0;30;0]^{T},P_{\gamma }), &  &  \\ 
\mathcal{N}(\cdot ,[80;0;90;0]^{T},P_{\gamma }),~~P_{\gamma
}=diag([3;2;3;2]). &  & 
\end{array}%
\end{equation*}%
The survival probability $P_{S}$ for the standard GLMB filter is 0.98 and
the control parameter $\gamma $ of the age-dependent survival probability is
set to $0.1$. The scene mask $b(x)$ of the same shape as Fig. \ref{Fig: mask}
with a margin of 10 pixels around the border area is used. \newline
\indent The observations are raw images simulated from the radar TBD
measurement model \cite{PapiKim15}, consisting of an array of pixel values
representing the power signal returns i.e., $y_{k}=[y^{(1)},...,y^{(i)}]$,
with%
\begin{equation}
y^{(i)}=\left\vert \displaystyle\sum\limits_{\mathbf{x}\in \mathbf{X}:i\in 
\mathcal{C}(\mathbf{x})}A(\mathbf{x})h_{A}^{(i)}(\mathbf{x}%
)+w^{(i)}\right\vert ^{2},  \label{eq:pixel_return}
\end{equation}%
where $\mathcal{C}(\mathbf{x})$ is usually referred to as the target
template, $A(\mathbf{x})$ denotes the amplitude of the return signal. 
\begin{equation}
h_{A}^{(i)}(\mathbf{x})=\text{exp}\left( -\frac{(r_{i}-r(\mathbf{x}))^{2}}{2R%
}-\frac{(s_{i}-s(\mathbf{x}))^{2}}{2S}\right) ,  \label{eq:point_spread}
\end{equation}%
is the point spread function value in cell $i$ from state $\mathbf{x}$, $R=1$
and $S=1$ are constants related to the image cell resolution; $r(\mathbf{x})$
and $s(\mathbf{x})$ are the coordinates of the object in the measurement
space; $r_{i}$ and $s_{i}$ are the cell centroids. \newline
Remark: Setting a relatively high SNR for the simulation means that the filter will mostly operate like a standard GLMB filter, while a low SNR means it mostly operates like a TBD-GLMB filter. Neither scenarios are interesting. In this example we simulate the observations with SNRs that fluctuate between 10dB and 7dB within the same image. Further, to demonstrate how the tracker adapts to the SNR mismatch, the observation model used by the tracker is instantiated with a 10dB SNR.

The detection and transformed image observation for the raw pixel image model (\ref%
{eq:pixel_return})-(\ref{eq:point_spread}) are obtained as follows.\newline
\indent A hard thresholding is applied to the raw image $y_{k}$, and the
detection model use by the proposed filter consists of a single-object
detection likelihood $g_{D,k}(z|x,\ell )=\mathcal{N}(z;Hx,\Sigma )$, where $%
H=[1~0~0~0;~0~0~1~0]$; $\Sigma =diag(4^{2},4^{2})$, a detection probability $%
P_{D}$ of 0.98, and a clutter rate of 10 points per frame. On the other
hand, the transformed image $T(y_{k})$ is the correlation response between
the reference template and the observed template, obtained from the raw
image $y_{k}$ via Kernelized Correlation Filtering (KCF) \cite{Henriques15}.
The image observation model use by the proposed filter is given by 
\begin{equation*}
g_{T,k}(T(y_{k})|x,\ell )\propto \text{exp}\left( -\frac{1}{\sigma ^{2}}%
\left( \left\Vert \text{f}(x)-\bar{\text{f}}_{\ell }\right\Vert ^{2}\right)
\right) ,
\end{equation*}%
where $\text{f}(x)$ is the observed template at a given object state $x$; $%
\bar{\text{f}}_{\ell }$ is the reference template of the track label $\ell $
(consists of pixel intensities in a 3 pixel by 3 pixel region); $\sigma $
controls the shape of the function. The Unscented Transform is used for the
measurement update for image observations. To adapt to appearance changes,
pixels inside regions with confident point detections are used to update $%
\bar{\text{f}}_{\ell }$. Empirically, this strategy is more robust than the
update scheme in \cite{Henriques15} because accumulated learning errors are
reduced by updating the model with confident detections. \newline
\begin{figure}[tbp]
	\begin{center}
		\includegraphics[height=.60\linewidth]{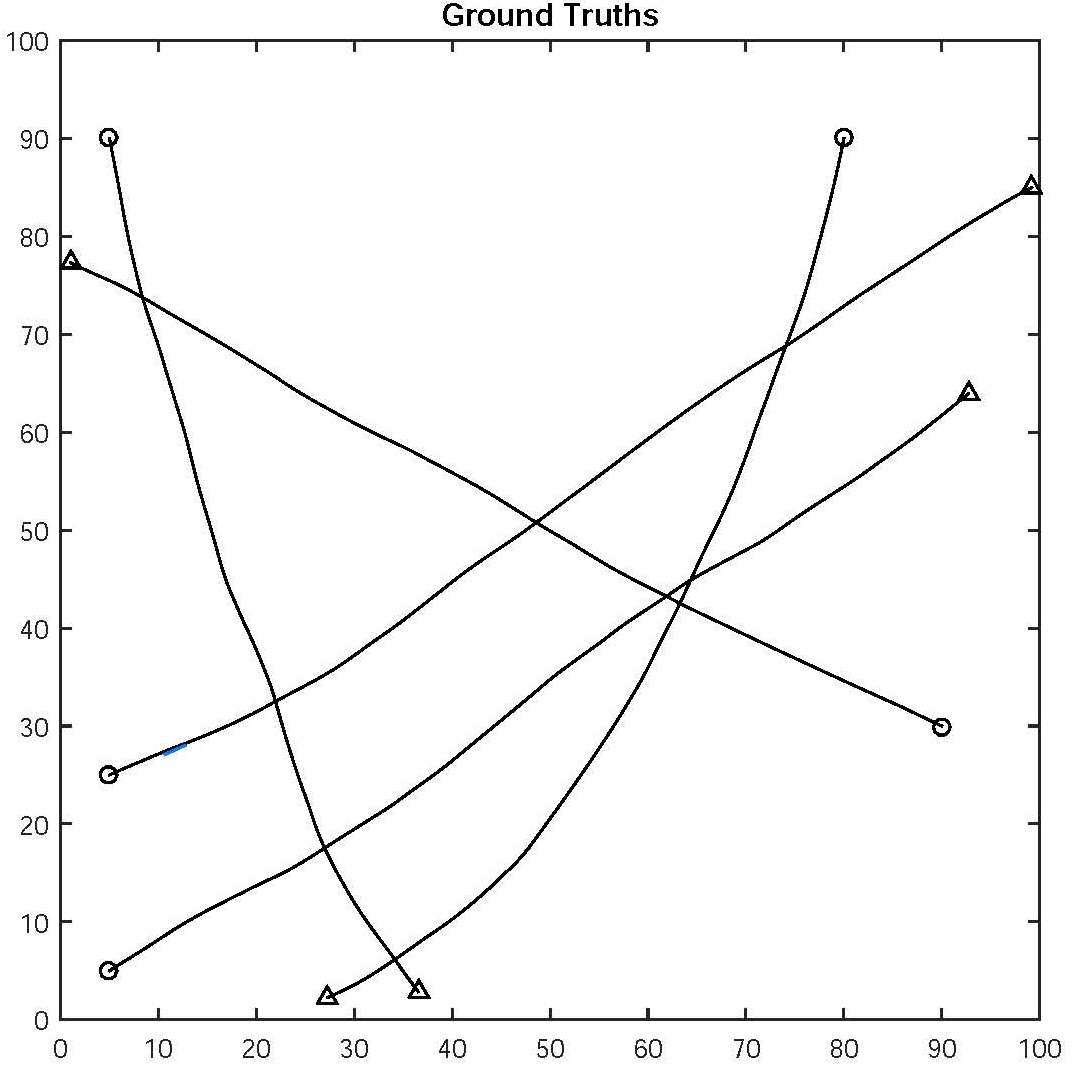} 
	\end{center}
	\caption{True tracks in the x y plane. Start/Stop positions are shown with $\circ/ \triangle$.}
\label{Fig: groundtruth} 
\end{figure}
\begin{figure}[tbp]
	\begin{center}
		\includegraphics[height=1.2\linewidth]{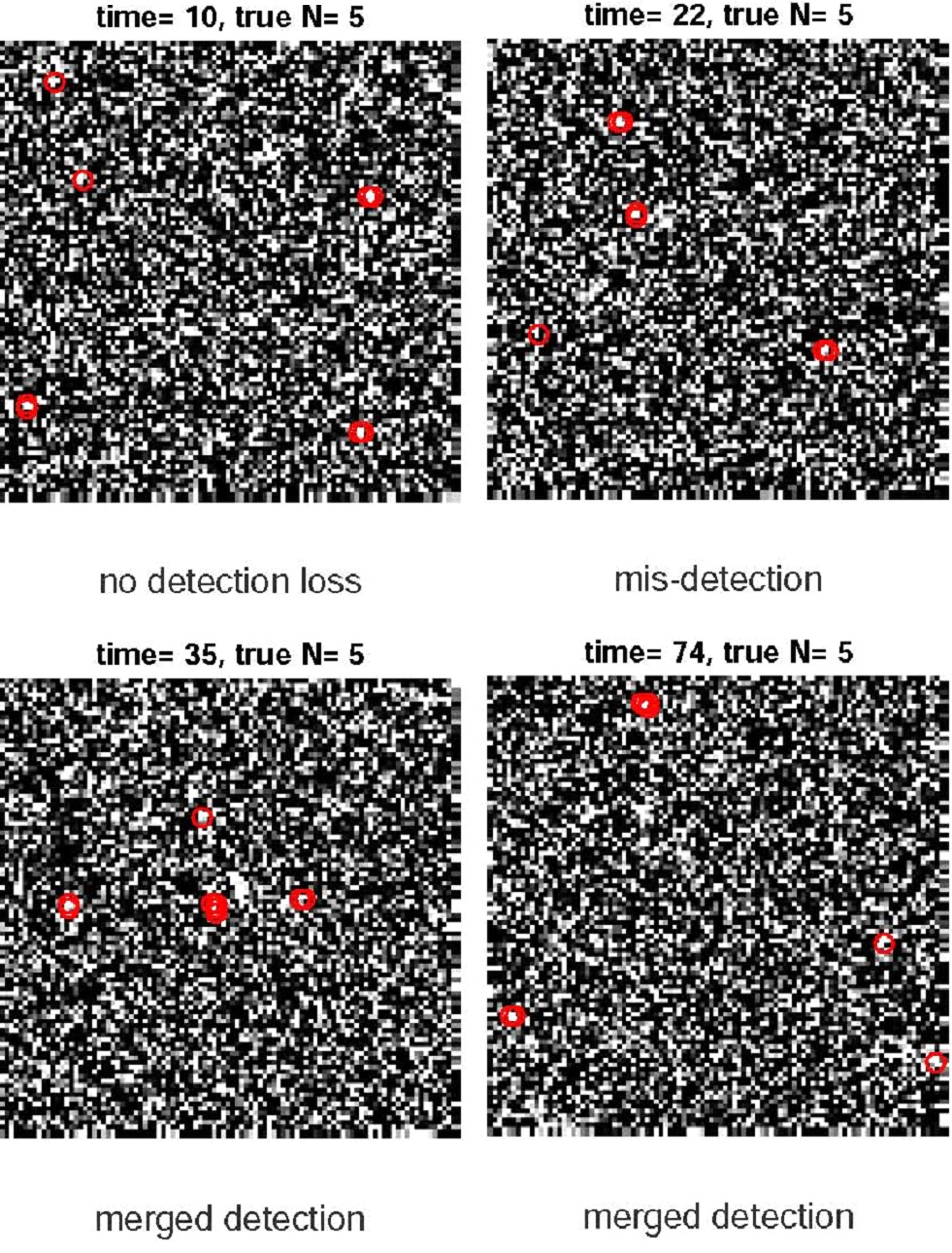} 
	\end{center}
	\caption{Snapshots of the image data (point detection is marked as circles)}
	\label{Fig: img_data}
\end{figure}
\begin{figure}[tbp]
	\begin{center}
		\includegraphics[height=.72\linewidth]{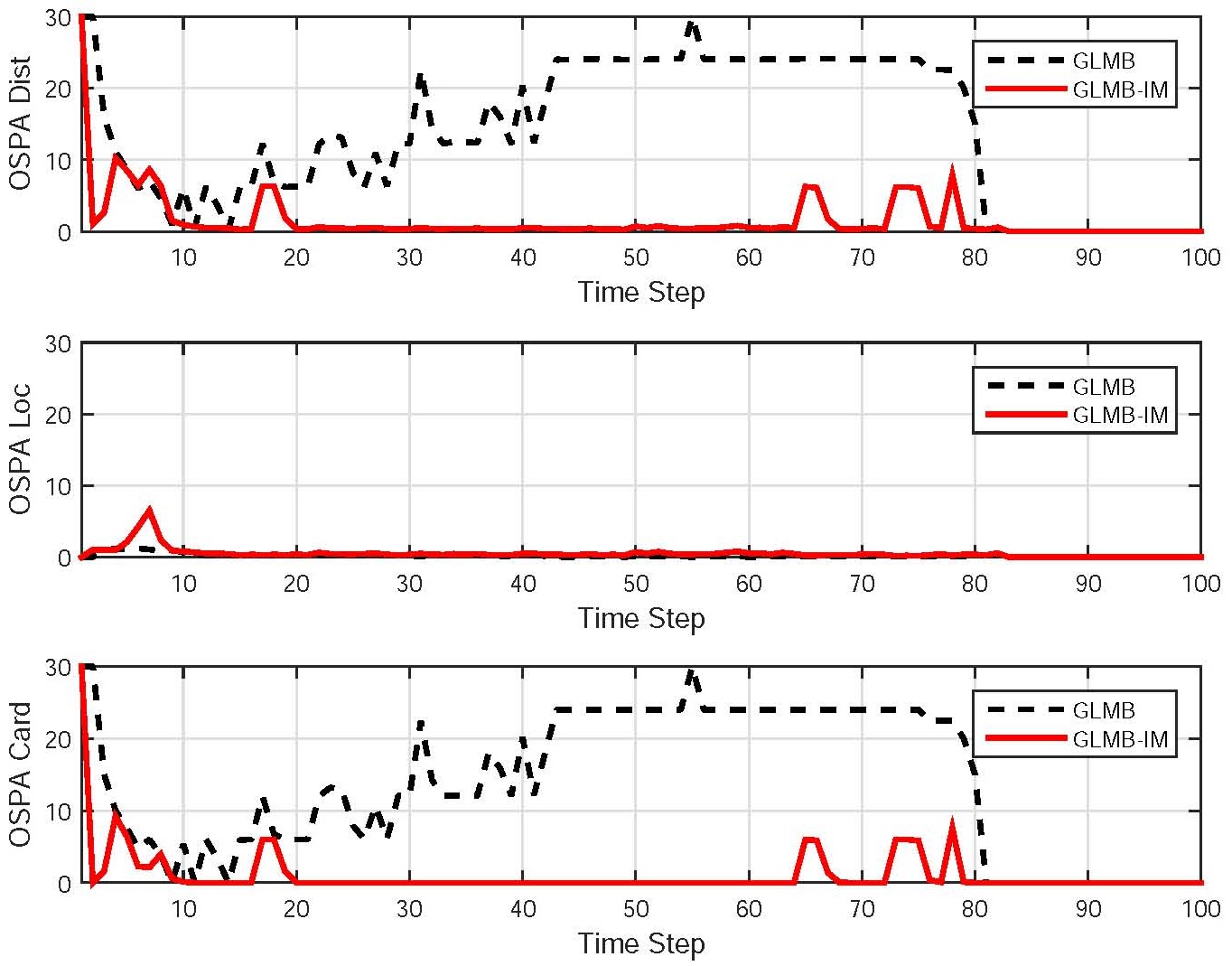} 
	\end{center}
	\caption{OSPA Error for three filters (first row: overall OSPA distance, Second row: localization error, third row: cardinality error)}
	\label{Fig: ospa}
\end{figure}

\subsubsection{Simulation scenario and comparison results}

The size of the surveillance area is 100 pixels by 100 pixels and the size
of the image cell is 1. Image data for the true tracks (shown in Fig. \ref%
{Fig: groundtruth}) is generated according the observation model (\ref%
{eq:pixel_return})-(\ref{eq:point_spread}). Sample snap shots of image
sequence are displayed in Fig. \ref{Fig: img_data} together with the true
number of objects and the description of detection results (by
hard-thresholding) for each snapshot. Fig. \ref{Fig: img_data} illustrates
that low SNR images are prone to mis-detections, and that merged detections
occur in mutual occlusions.\newline
\indent We compare the standard GLMB filter (GLMB) with the proposed GLMB for image
observation (GLMB-IM) (with time-dependent survival probability). The performance comparison is summarized in Fig. \ref{Fig: ospa}
with respect to OSPA errors \cite{OSPA} calculated over 100 Monte Carlo runs.\newline
\indent Note from Fig. \ref{Fig: ospa}, that the standard GLMB filter
quickly lost tracks due to the mis-detections from low SNR or merged
detections from object occlusions. On the other hand, the GLMB-IM filter keeps tracks due to the combination of proposed survival probability and effective measurement updates from the image data.

\subsection{Visual Tracking\label{subsec:visual_tracking}}

\subsubsection{Dataset and parameter settings}

In this subsection, we test the proposed MOT filter on publicly available
video data: the \textit{S2L1} sequence of the PET2009 dataset \cite{PETS09};
the \textit{BAHNHOF} and \textit{SUNNYDAY} sequence of the ETH dataset \cite%
{EssCVPR08}; and the \textit{TUD-Stadtmitte} sequence from \cite%
{AndrilukaCVPR10}. To benchmark the tracking performance against a number of
recent algorithms, we use published detection results and evaluation tools
from \cite{YangCVPR12}. The same motion model in the TBD example is used
with $\sigma _{v}=3$ pixels/frame (set by considering the maximum speed of
the object with regard to the frame rate).\newline
Remark: While the object's extent such as its bounding box \cite%
{Breitenstein11}, \cite{Milan_2014}, \cite{AndrilukaCVPR10}, can be included
in the object state, effective modeling of extent dynamics is application
dependent. In experiments we estimate an object's extent via the median
values of the x, y scale of the detections associated with existing tracks
in a given time window. \newline
{Remark}: Similar to single-object visual tracking filtering in \cite%
{Breitenstein11}, the predicted covariance for each track is capped to a
prescribed value to prevent it from exploding over time. \newline
\indent The RFS framework accommodates a time-varying birth model. In this
experiment, we use a birth model that consists of both static and dynamic
components. The static component is an LMB that describes expected locations
where objects are highly likely to appear e.g., the image border/footpaths
near the image border. The dynamic component is a time-varying LMB that
exploits measurements with weak associations (to existing tracks) to
describe highly likely object births at the next time frame \cite{Reuter14}.%
\newline
\indent The detection $z\in D(y_{k})$ of an object is obtained by a detector
based on aggregated channel features (ACF) \cite{DollarBMVC09ChnFtrs} and
the same point measurement model in the numerical example is used with $%
\Sigma =diag(5^{2},5^{2})$. The probability of detection $P_{D}$ is 0.98 and
the clutter rate is 5, i.e., an average of 5 clutter measurements per frame.
These parameters can be obtained from training data or learned on-the-fly in
the RFS framework as proposed in \cite{R_CPHD_Mahler}. For image
observations, we also used KCF method as in the numerical example but with
Histogram of Oriented Gradients (HOG) feature instead of raw pixel \cite%
{Henriques15}.\newline
\indent In the experiments, the maximum number of track hypotheses $%
H_{+}^{\max }$ is set to 200, and track estimates are obtained from the GLMB
filtering density via the LMB estimator described in Subsection \ref%
{subsec:VGLMB}. Note that when the LMB estimator terminates a track, the
GLMB filtering density still contains its existence probability and state
density (hence state estimate). This information is completely deleted only
when its existence probability is so negligible that all relevant GLMB
components are truncated. If not completely deleted, it is possible that due
to new evidence in the data at later time, a track's existence probability
becomes significant enough to be selected by multi-object estimator, leading
to track fragmentation. While this problem can be addressed in a principled
manner via multi-object smoothing, the GM-PHD smoother \cite{MVV12} is not
applicable and an implementation of the forward-backward GLMB smoother \cite%
{BVV16} is not yet available. Nonetheless, we can exploit the available
information on the terminated track from the GLMB density in previous frames
to recover missing state estimates.

\begin{table*}[tbp]
{\footnotesize \ }
\par
\begin{center}
{\footnotesize \ 
\begin{tabular}{|l|l|ccc|cccc|cc|}
\hline
\textbf{Dataset} & \textbf{Method} & \textbf{Recall} $\uparrow$ & \textbf{%
Precision} $\uparrow$ & \textbf{FPF} $\downarrow$ & \textbf{GT} & \textbf{MT}
$\uparrow$ & \textbf{PT} $\downarrow$ & \textbf{ML} $\downarrow$ & \textbf{%
Frag} $\downarrow$ & \textbf{IDS} $\downarrow$ \\ \hline\hline
& {GLMB-IM$\ast$} & 95.6 \% & 92.2 \% & 0.03 & 19 & 95 \% & 5 \% & 0.0 \% & 
23 & 5 \\ 
& {RMOT$\ast$} \cite{RMOT} & 95.6 \% & 95.4 \% & 0.05 & 19 & 94.7 \% & 5.3 \%
& 0.0 \% & 23 & 1 \\ 
PETS09-S2L1 & {StruckMOT$\ast$ \cite{Suna_2012}} & 97.2 \% & 93.7 \% & 0.38
& 19 & 94.7 \% & 5.3 \% & 0.0 \% & 19 & 4 \\ 
& {{PRIMPT} \cite{KuoCVPR11}} & 89.5 \% & 99.6 \% & 0.02 & 19 & 78.9 \% & 
21.1 \% & 0.0 \% & 23 & 1 \\ 
& {{CemTracker} \cite{Milan_2014}} & - & - & - & 19 & 94.7 \% & 5.3 \% & 0.0
\% & 15 & 22 \\ 
& {{KSP} \cite{Berclaz}} & - & - & - & 23 & 73.9 \% & 17.4 \% & 8.7 \% & 22
& 13 \\ 
& {{GeodesicTracker$\ast$} \cite{Possegger}} & - & - & - & 23 & 100 \% & 0 \%
& 0 \% & 16 & 9 \\ \hline\hline
& {GLMB-IM$\ast$} & 89.4 \% & 87.6 \% & 0.11 & 10 & 80.0 \% & 20.0 \% & 0.0
\% & 10 & 1 \\ 
& {{RMOT$\ast$} \cite{RMOT}} & 87.9 \% & 96.6 \% & 0.19 & 10 & 80.0 \% & 
20.0 \% & 0.0 \% & 7 & 6 \\ 
TUD-Stadtmitte & {StruckMOT$\ast$ \cite{Suna_2012}} & 87.3 \% & 95.4 \% & 
0.25 & 10 & 80.0 \% & 20.0 \% & 0.0 \% & 11 & 0 \\ 
& {{PRIMPT$\ast$} \cite{KuoCVPR11}} & 81.0 \% & 99.5 \% & 0.028 & 10 & 60.0
\% & 30.0 \% & 10.0 \% & 0 & 1 \\ 
& {{OnlineCRF} \cite{YangCVPR12}} & 87.0 \% & 96.7 \% & 0.18 & 10 & 70.0 \%
& 30.0 \% & 0.0 \% & 1 & 0 \\ 
& {{CemTracker} \cite{Milan_2014}} & - & - & - & 10 & 40.0 \% & 60.0 \% & 
0.0 \% & 13 & 15 \\ 
& {{KSP} \cite{Berclaz}} & - & - & - & 9 & 11.0 \% & 78.0 \% & 11.0 \% & 15
& 5 \\ \hline\hline
& {GLMB-IM$\ast$} & 80.1 \% & 85.6 \% & 0.98 & 124 & 62.4 \% & 32.6 \% & 5.0
\% & 70 & 20 \\ 
& {RMOT$\ast$} \cite{RMOT} & 81.5 \% & 86.3 \% & 0.98 & 124 & 67.7 \% & 27.4
\% & 4.8 \% & 38 & 40 \\ 
ETH & {StruckMOT$\ast$} \cite{Suna_2012} & 78.4 \% & 84.1 \% & 0.98 & 124 & 
62.7 \% & 29.6 \% & 7.7 \% & 72 & 5 \\ 
BAHNHOF and & {MOT-TBD$\ast$ \cite{PoiesiCVIU13}} & 78.7 \% & 85.5 \% & - & 
125 & 62.4 \% & 29.6 \% & 8.0 \% & 69 & 45 \\ 
SUNNYDAY & {{PRIMPT} \cite{KuoCVPR11}} & 76.8 \% & 86.6 \% & 0.89 & 125 & 
58.4 \% & 33.6 \% & 8.0 \% & 23 & 11 \\ 
& {{OnlineCRF} \cite{YangCVPR12}} & 79.0 \% & 85.0 \% & 0.64 & 125 & 68.0 \%
& 24.8 \% & 7.2 \% & 19 & 11 \\ 
& {{CemTracker} \cite{Milan_2014}} & 77.3 \% & 87.2 \% & - & 124 & 66.4 \% & 
25.4 \% & 8.2 \% & 69 & 57 \\ \hline
\end{tabular}
\\[0pt]
\vspace{3pt} \vspace{-10pt} }
\end{center}
\caption{Comparison of tracking performance with the state-of-the-art
trackers (Online methods are indicated by $\ast$)}
\label{tab:comparison_state_of_the_art}
\end{table*}
\begin{figure*}[t]
	\begin{center}
		$~$\includegraphics[height=.115\linewidth]{./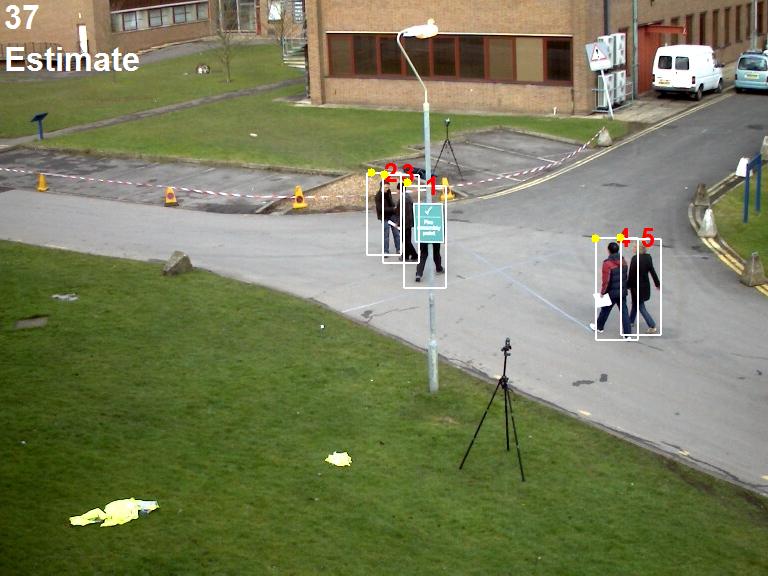}  %
		\includegraphics[height=.115\linewidth]{./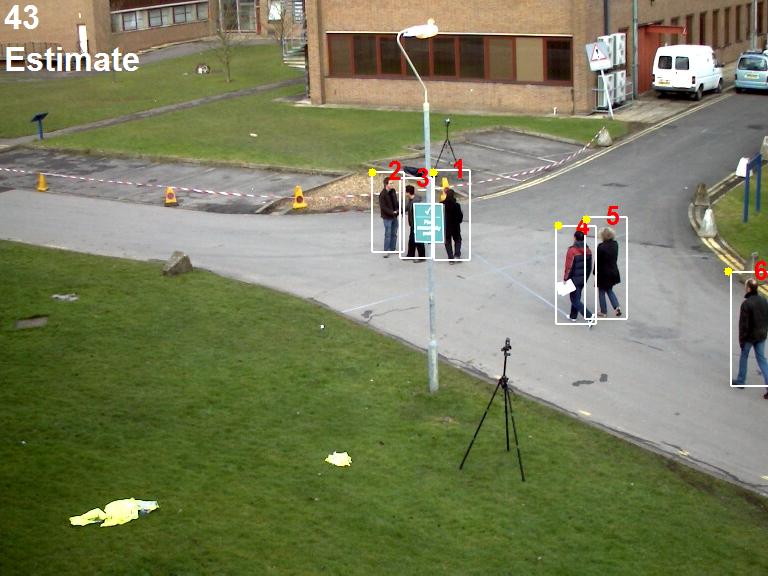}  %
		\includegraphics[height=.115\linewidth]{./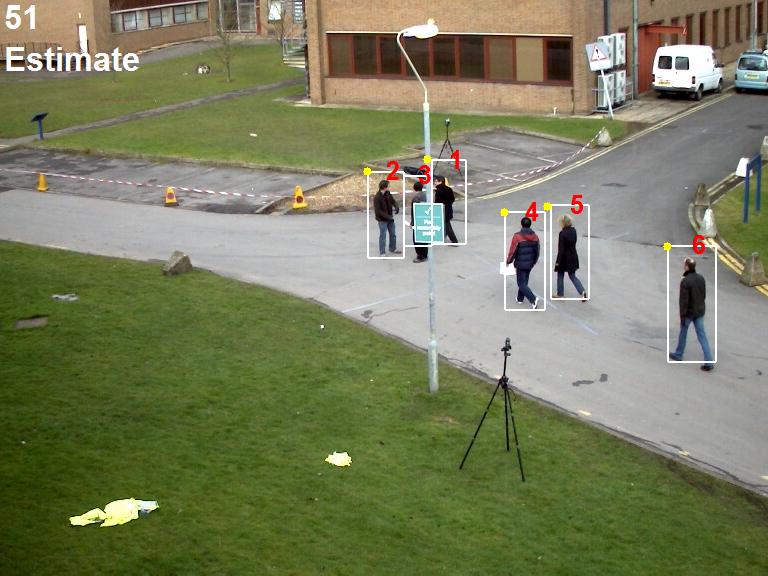}  %
		\includegraphics[height=.115\linewidth]{./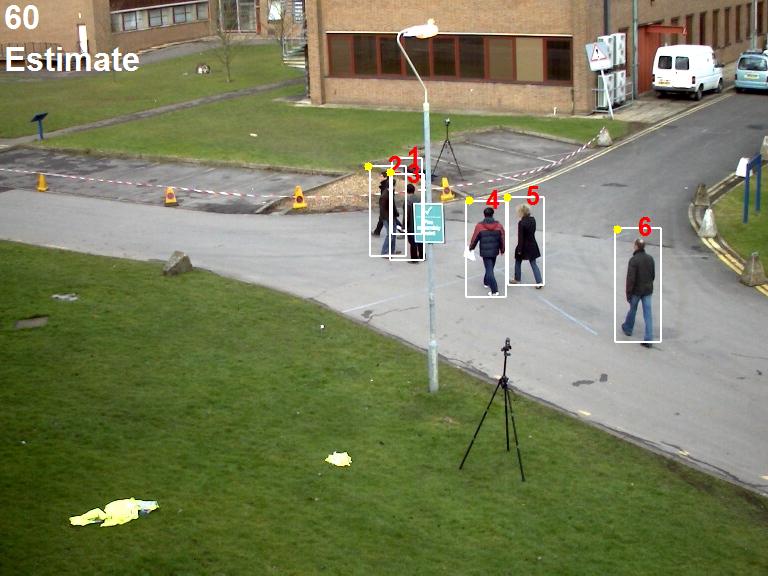}  %
		\includegraphics[height=.115\linewidth]{./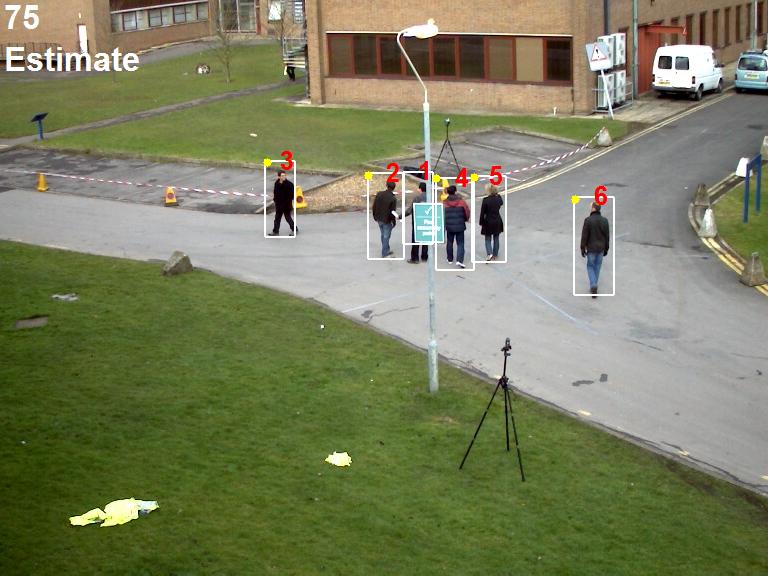}  %
		\includegraphics[height=.115\linewidth]{./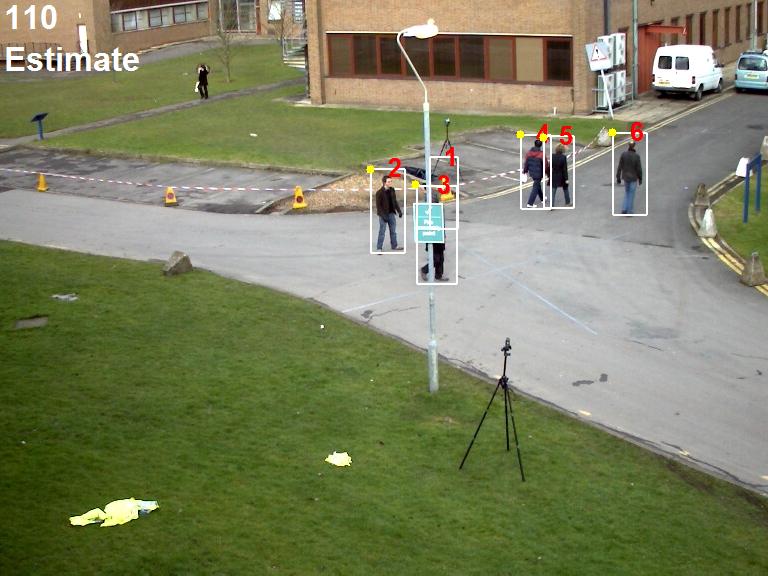}\\[0pt]
		$~$\includegraphics[height=.115\linewidth]{./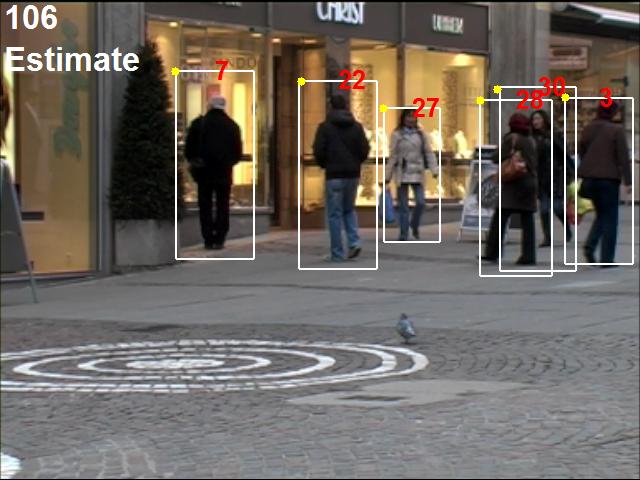}  %
		\includegraphics[height=.115\linewidth]{./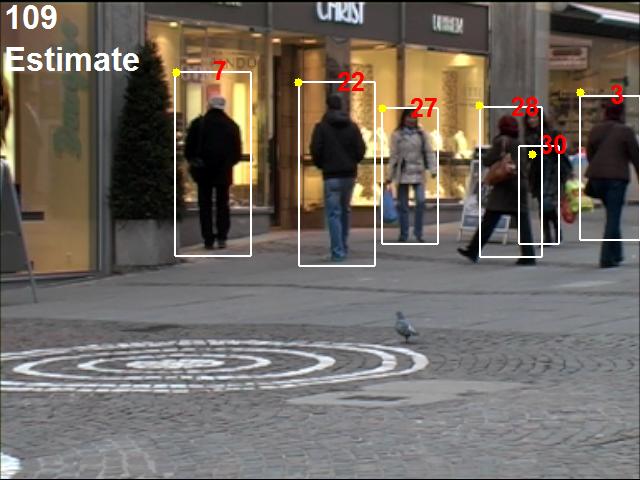}  %
		\includegraphics[height=.115\linewidth]{./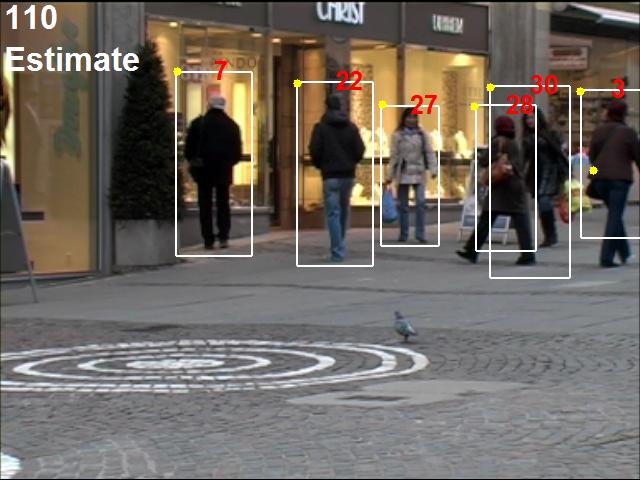}  %
		\includegraphics[height=.115\linewidth]{./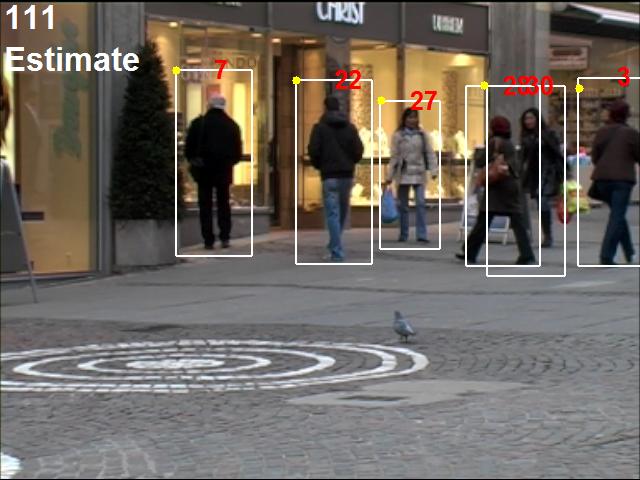}  %
		\includegraphics[height=.115\linewidth]{./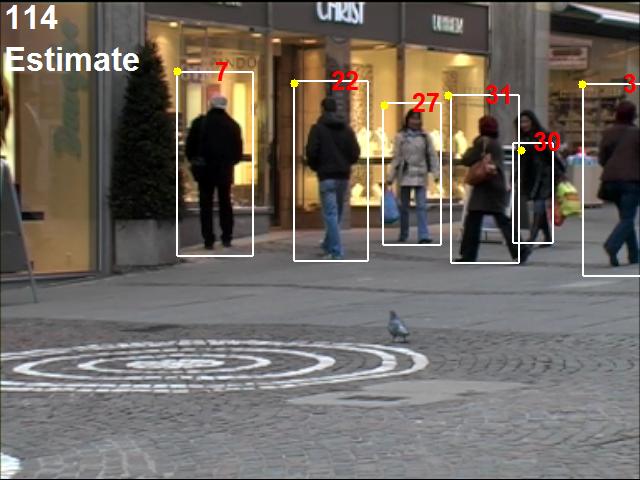}  %
		\includegraphics[height=.115\linewidth]{./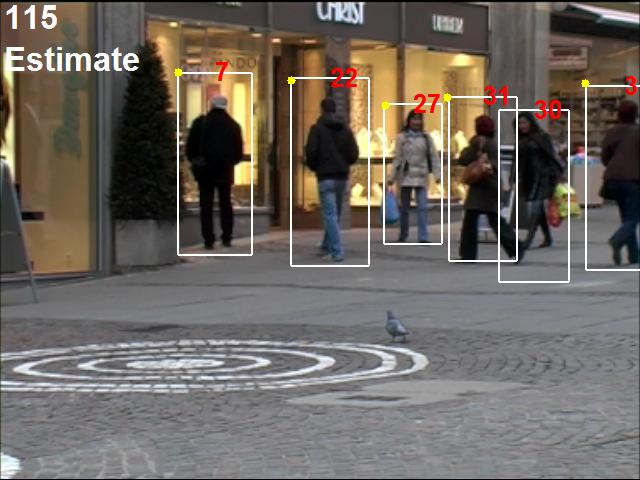}\\[0pt]
		$~$\includegraphics[height=.115\linewidth]{./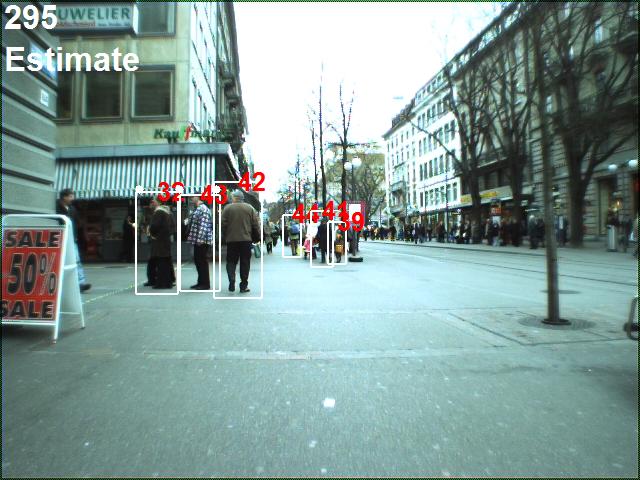}  %
		\includegraphics[height=.115\linewidth]{./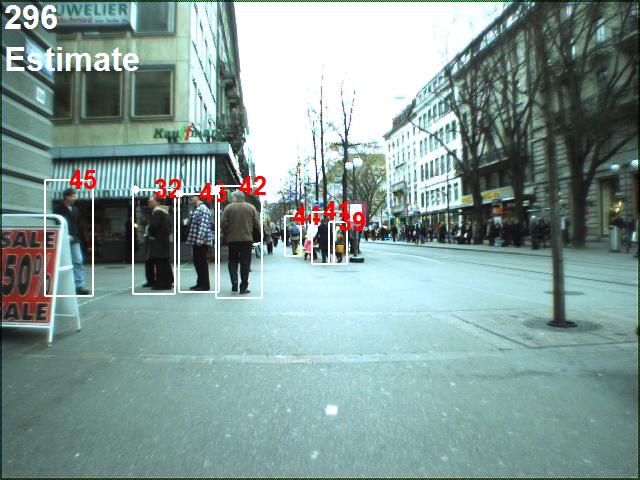}  %
		\includegraphics[height=.115\linewidth]{./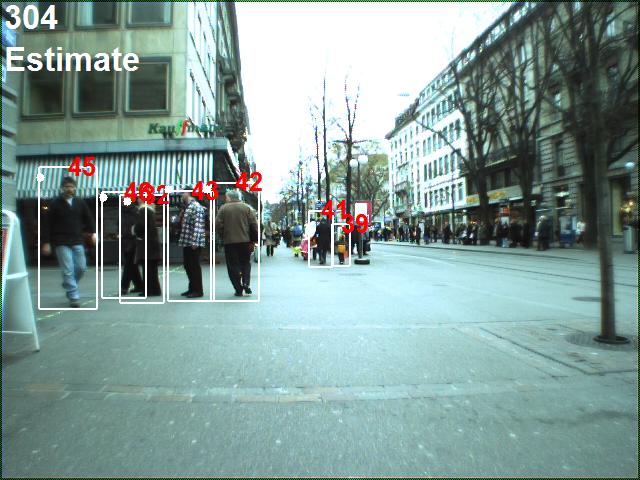}  %
		\includegraphics[height=.115\linewidth]{./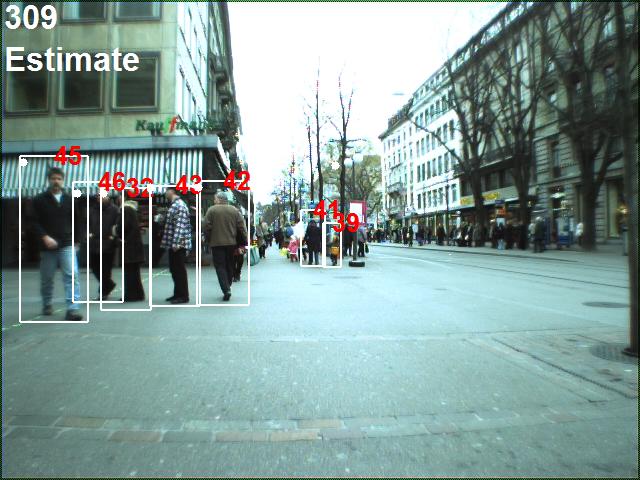}  %
		\includegraphics[height=.115\linewidth]{./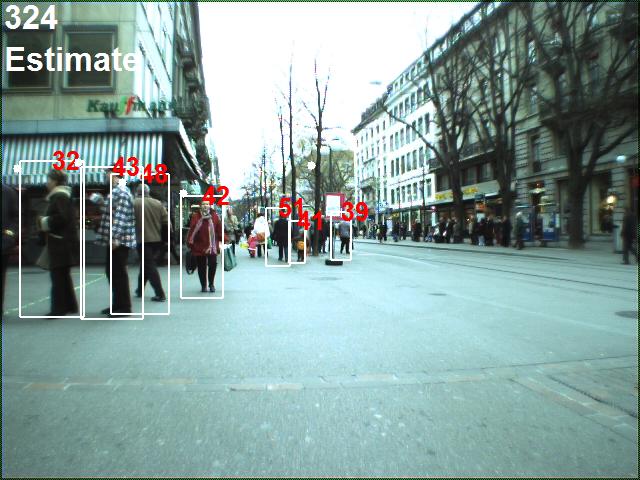}  %
		\includegraphics[height=.115\linewidth]{./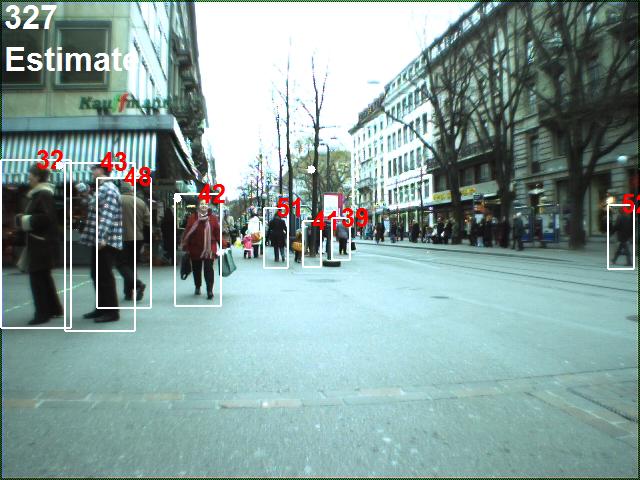}\\[0pt]
		$~$\includegraphics[height=.115\linewidth]{./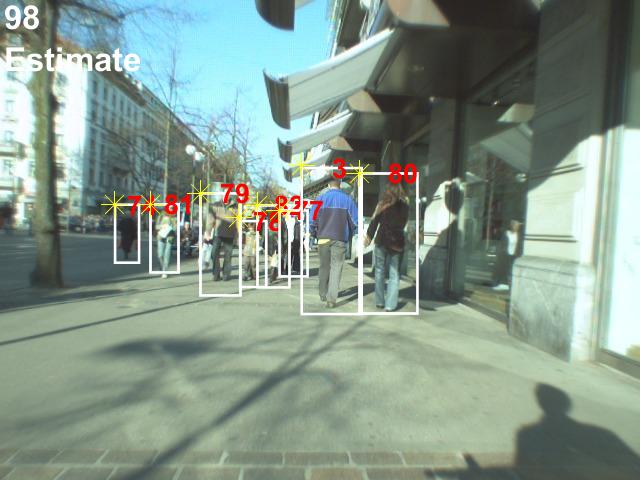}  %
		\includegraphics[height=.115\linewidth]{./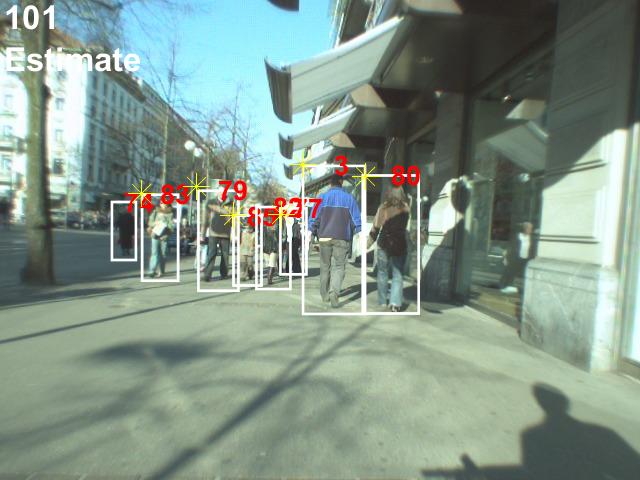}  %
		\includegraphics[height=.115\linewidth]{./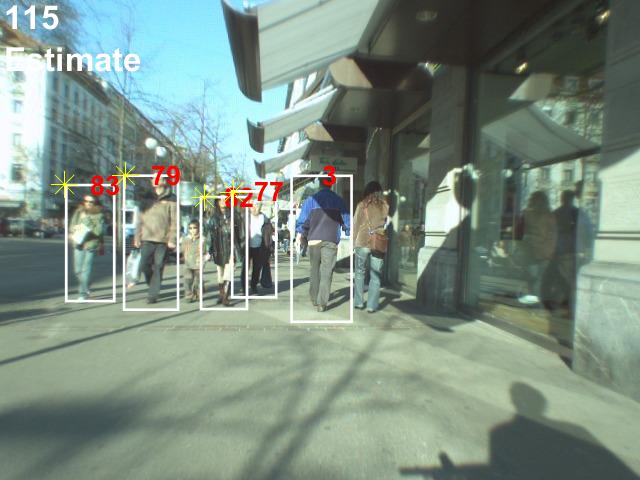}  %
		\includegraphics[height=.115\linewidth]{./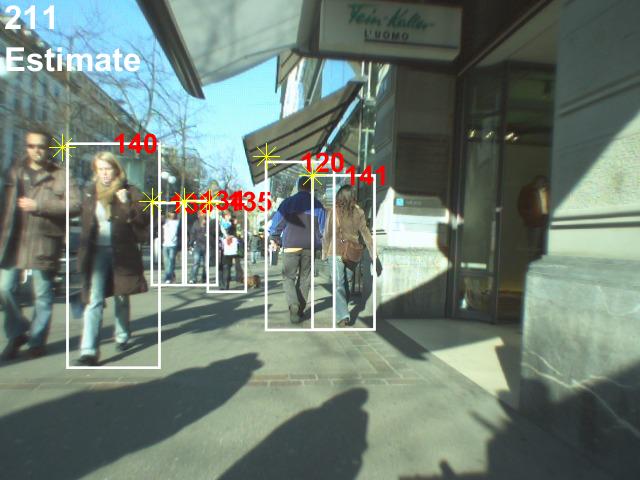}  %
		\includegraphics[height=.115\linewidth]{./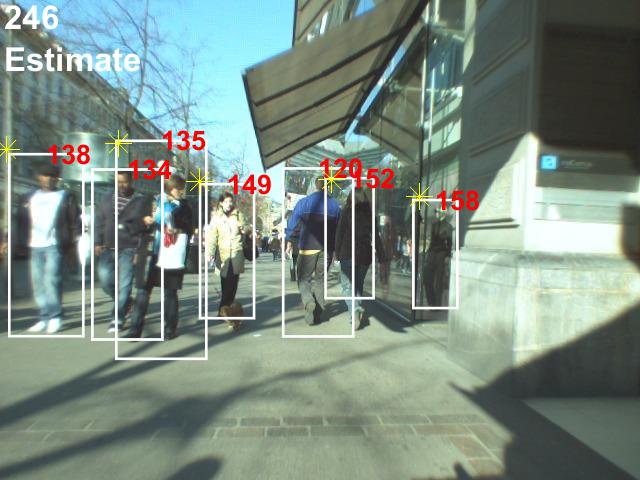}  %
		\includegraphics[height=.115\linewidth]{./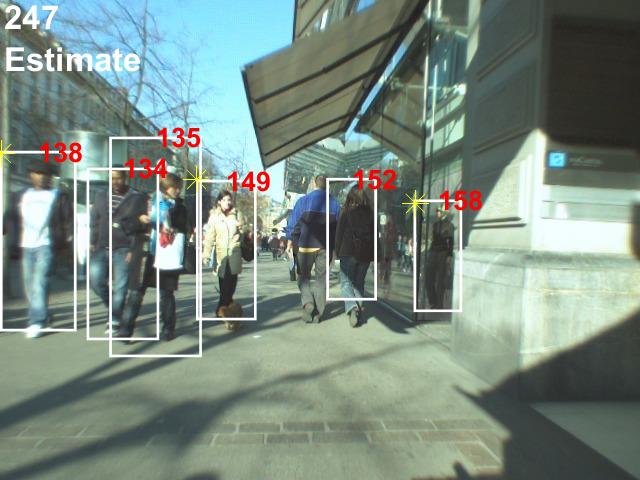}\\[0pt]

	\end{center}
	\caption{Selected frames of the tracking results}
	\label{Fig: visual_trackingResults}
\end{figure*}

\subsubsection{Quantitative performance analysis\label{subsec:Quantitative}}

The GLMB-IM filter tracking performance is benchmarked against offline-based
methods such as: StruckMOT \cite{Suna_2012}, PRIMPT \cite{KuoCVPR11},
CemTracker \cite{Milan_2014}, and KSP \cite{Berclaz}; and recent online
trackers: \cite{RMOT}, \cite{Possegger}. Also note that the online tracker 
\cite{Possegger} cannot be applied to the second and third sequences because
ground plane information is not available. We use well-known MOT performance
indices \cite{YangCVPR12} such as Recall (correctly tracked objects over
total ground truth), Precision (correctly tracked objects over total
tracking results), and false positives per frame (FPF). We also report the
number of identity switches (IDS) and the number of fragmentations (Frag).
Table \ref{tab:comparison_state_of_the_art} shows the ratio of tracks with
successfully tracked parts for more than 80\% (mostly tracked (MT)), less
than 20\% (mostly lost (ML)), or less than 80\% and more than 20\%
(partially tracked (PT)). The up (down) arrows in Table \ref%
{tab:comparison_state_of_the_art} mean that higher (lower) the values
indicate better performance.\newline
\indent As can be seen from Table \ref{tab:comparison_state_of_the_art}, the
GLMB-IM filter achieves the best or second best performance in important
indicators such as FPF, Recall and MT, amongst the online methods. For Frag
and IDS, the GLMB-IM filter is consistently in the top three performers in
average. More surprisingly, it has comparable accuracy with offline methods,
keeping in mind that it runs in near real-time with basic Matlab implementation (see
Table \ref{tab:averaged speed}). In summary the GLMB-IM filter offers
practical trade-offs between accuracy and speed for real-time applications.
Further, as briefly mentioned before, the GLMB-IM can be extended to offline
methods such as batch estimation or via smoothing techniques.\newline
\indent In the ETH sequences, the RMOT \cite{RMOT} shows slightly better
results because the proposed relative motion network model in RMOT is
especially tailored for handling of full occlusions in tracking scenarios of
group of people walking in the same directions. More fragmentation is
observed in the EHT sequences due to re-initialization of objects from the
measurement-driven birth model when they emerge from very long full
occlusions. Due to the generality of the framework, more sophisticated
motion models and other types of detections and appearance features can be
incorporated for further improvements. Selected frames of tracking results
for object occlusions are given in Fig. \ref{Fig: visual_trackingResults}.%
\newline
\begin{figure*}[tbp]
	\begin{center}
		\includegraphics[height=.20\linewidth]{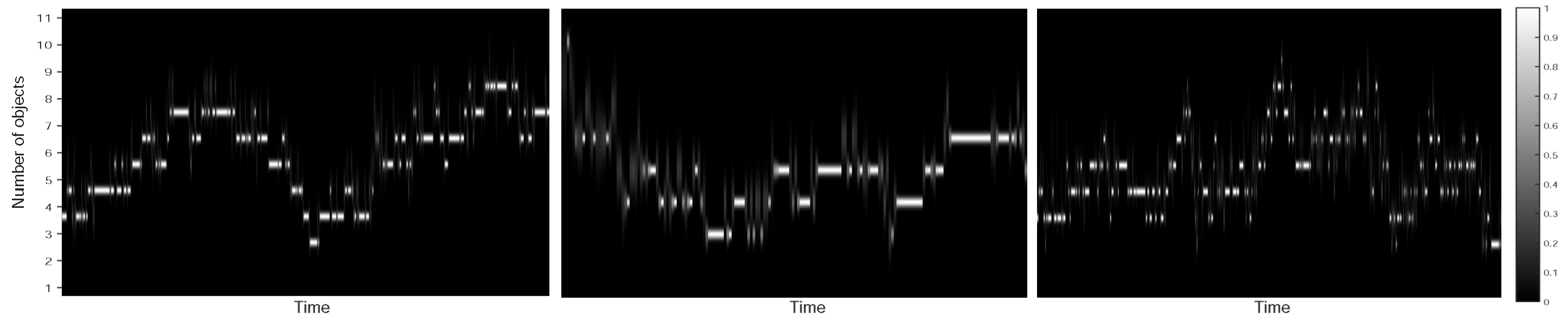} 
	\end{center}
	\caption{Cardinality distributions for three sequences left:\textit{S2L1},
		center:\textit{TUD}, right:\textit{ETH})}
	\label{Fig: Cardinality}
\end{figure*}
\indent The RFS approach also provides the probability distribution of the
current number of objects, i.e., cardinality distribution (\ref{eq:GLMBCard}%
) (which is not available in other tracking approaches). Fig. \ref{Fig:
Cardinality} shows the frame by frame cardinality distribution for the three
data sequences. \newline
\indent The tracking experiments with the proposed GLMB-IM filter are
implemented in MATLAB using single core (Intel i7 2.4GHz 5500) CPU laptop. A
comparison of tracking speed with other trackers (excluding the point
detection process) is summarized in Table \ref{tab:averaged speed}, which
shows an average (over all three experiments) of 20 fps for the GLMB-IM
filter (without code optimization). Hence, the GLMB-IM filter is very
well-suited for online applications considering further speed up can be
achieved using C++ and code optimization. Further, the salient feature of
the proposed GLMB-IM filter is its linear complexity with respect to the
number of detections \cite{VVH}.\newline
\indent It is important to note that the computation speeds in Table \ref%
{tab:averaged speed} only serves as a rough indication because all
implementations are dependent on the hardware platform, programming
language, code structure, test sequence scenarios, etc.

\section{Conclusion\label{sec:Conclusion}}

\begin{table}[tbp]
\begin{center}
{\small \ 
\begin{tabular}{|c|c|c|}
\hline
\textbf{Method} & \textbf{Average speed} & \textbf{Implementation} \\ 
\hline\hline
GLMB-IM & 20 fps & MATLAB \\ \hline
RMOT \cite{RMOT} & 3.5 fps & MATLAB \\ \hline
GeodesicTracker \cite{Possegger} & 11.2 fps & MATLAB \\ \hline
StruckMOT \cite{Suna_2012} & 15 fps & MATLAB \\ \hline
CemTracker \cite{Milan_2014} & 0.55 fps & MATLAB \\ \hline
OnlineCRF \cite{YangCVPR12} & 8 fps & C++ \\ \hline
\end{tabular}
}
\end{center}
\caption{Comparison of averaged speed }
\label{tab:averaged speed}
\end{table}

An efficient online MOT algorithm for video data that seamlessly integrates
state estimation, track management, clutter rejection, mis-detection and
occlusion handling into one single Bayesian recursion has been proposed.
Further, the proposed MOT algorithm exploits the advantages of both
detection-based and TBD approaches. In particular, it has the efficiency of
detection-based approach that avoids updating with the entire image, while
at the same time making use of information at the image level by using only
small regions of the image where mis-detected objects are expected. The
proposed algorithm has a linear complexity in the number of detections and
quadratic in the number of hypothesized tracks, making it suitable for
real-time computation. Moreover, experimental results on well-known datasets
indicated that proposed algorithm is competitive in tracking accuracy and
speed compared to data association based batch algorithms and recent online
algorithms.


%

\appendices


\ifCLASSOPTIONcompsoc

\section*{Acknowledgments}

\else

\section*{Acknowledgment}

\fi

\ifCLASSOPTIONcaptionsoff
\newpage \fi



%

\bibliographystyle{IEEEtran}
\bibliography{bib}

%





\end{document}